\documentclass{article} % For LaTeX2e
\usepackage{iclr2027_conference,times}
\usepackage{newtxtext}
\usepackage{newtxmath}

\usepackage{amsmath}
\usepackage{amsthm}
\usepackage{hyperref}
\usepackage{url}
\usepackage{booktabs}
\usepackage{graphicx} 
\newtheorem{theorem}{Theorem}
\newtheorem*{theorem*}{Theorem}
\newtheorem{assumption}{Assumption}
\newtheorem{proposition}{Proposition}
\newtheorem{example}{Example}
\newtheorem{remark}{Remark}
\usepackage{multirow}
\usepackage{subcaption}
\usepackage{wrapfig}
\usepackage{placeins}

\newcommand{\modelname}{SplitJEPA }
\usepackage{amsmath,amsfonts,bm}

\def\eqref#1{equation~\ref{#1}}
\def\1{\bm{1}}

\DeclareMathAlphabet{\mathsfit}{\encodingdefault}{\sfdefault}{m}{sl}
\SetMathAlphabet{\mathsfit}{bold}{\encodingdefault}{\sfdefault}{bx}{n}

\title{SplitJEPA: Learning Invariant and Variant Latent Worlds without Reconstruction}
\author{
Ruijin Hua$^1$,
Zichuan Liu$^2$,
Zhuokai Zhao$^3$,
Yujia Zheng$^{1}$
\\[6pt]
$^1$ University of Illinois Urbana-Champaign
\hfill
$^2$ Carnegie Mellon University
\hfill
$^3$ University of Chicago
}

\iclrfinalcopy 
\begin{document}

\maketitle

\begingroup
\renewcommand{\thefootnote}{}
\footnotetext{Project website: \url{https://splitjepa.github.io/}}
\addtocounter{footnote}{-1}
\endgroup

\vspace{-10pt}
\begin{abstract}

Understanding a dynamical world calls for more than a latent state that summarizes its observations: the state should also be organized into the factors that stay shared across related observations and the factors that vary between them. For example, a robot pushing a cube to a goal should take the same action when the camera shifts or the lights dim, since nothing in the scene has moved. Existing approaches to this decomposition commonly obtain it through reconstruction, so the latent variables must first explain the entire observational world before their organization can be trusted. Joint embedding predictive architectures (JEPAs) model the latent state directly and never reconstruct, yet no existing result recovers the invariant and variant parts of the state they learn. How to learn the invariant-variant structure of the latent world without paying for its reconstruction therefore remains open. To close this gap, we introduce \textbf{SplitJEPA}, a JEPA that jointly recovers the latent state and its invariant and variant organization directly in representation space, without any reconstruction. We prove that, under stationary Gaussian predictive dynamics and a full-rank variation condition, \modelname identifies the invariant and variant subspaces up to independent block-wise isometries, without introducing an observation decoder. Since the guarantee needs no decoder, the result extends reconstruction-free latent recovery to invariant-variant block identification.  Experiments on synthetic nonlinear systems and robotic manipulation tasks support the theoretical results and show their practical value for both robustness and efficiency.

\end{abstract}
\vspace{-10pt}

\section{Introduction}
Learning a useful representation of a dynamical world requires more than preserving information about its observations. 
The representation should also reveal how the latent state is structured into factors that remain stable and factors that vary across relevant changes~\citep{scholkopf2021causalrepresentationlearning,kugelgen2021selfsupervised}. Robot manipulation provides an example. Different executions of the same task may vary in robot configuration, local motion, viewpoint, illumination, and object appearance. A useful representation should retain the invariant content while separating the sources of variation, allowing downstream computations to use stable information without discarding variation when it is useful. This motivates structured representation learning: recovering not only the global latent state, but also its invariant–variant decomposition.

This objective has inspired invariant representation learning from multiview and temporal robotic data~\citep{sermanet2017time,nair2022rm}, as well as content-style approaches that seek to separate shared information from varying factors~\citep{shrestha2025contentstyle,timilsina2026contentstyle,wang2025an}. These approaches commonly rely on reconstruction or related generative objectives: an encoder maps observations into latent variables, while the decoder requires those variables to keep enough information to reproduce the observations. Although effective, reconstruction introduces additional modeling and computational costs and can direct representation capacity toward noise and observation details that are unnecessary for downstream tasks. More fundamentally, reconstruction encourages the representation to preserve information needed to reproduce the observations, but it does not by itself specify how that information should be partitioned into invariant and variant structure. Indeed, an invertible nonlinear reparameterization may preserve the complete observation distribution while mixing the underlying latent factors~\citep{hyvarinen1999nonlinear,locatello2019challengingcommonassumptionsunsupervised}. Identifiable generative models address this ambiguity by introducing additional supervision or functional constraints~\citep{hyvarinen2024identifiability,moran2026toward}, illustrating that these additional structures determine the latent organization. Our goal is to explore whether suitable structure can identify the desired decomposition directly in representation space without requiring an observation decoder. This raises the core question of our work: \emph{can the invariant--variant structure of a dynamical latent state be identified without reconstructing observations?}

Reconstruction-free learning offers a promising foundation. Joint-Embedding Predictive Architectures (JEPAs) learn informative representations by making predictions directly in representation space without reconstruction~\citep{assran2023selfsupervisedlearningimagesjointembedding,bardes2024revisiting,Dawid_2024}. Related robotic studies further show that useful invariances can be learned without pixel reconstruction and that separating view-invariant from view-dependent information can improve visual world models~\citep{zhang2021learning,pang2025learning}.  However, these empirical results do not guarantee recovery of the ground-truth latent structure or its invariant-variant decomposition. Contrastive objectives can reduce nonlinear latent ambiguities by exploiting relationships between observations under asymmetry assumptions~\citep{hyvarinen2016unsupervised,hyvarinen2017nonlinear,zimmermann2022contrastivelearninginvertsdata}. At the same time, block identification does not significantly require identifying individual coordinates within each block. LeJEPA provides a complementary result in this direction. Under stationary Gaussian predictive dynamics and appropriate representation normalization, it identifies the complete latent process~\citep{balestriero2025lejepaprovablescalableselfsupervised,klindt2026doeslejepalearnworld}. Nevertheless, the residual linear mixing can still entangle invariant and variant directions and therefore does not identify their decomposition. Consequently, whether the invariant-variant structure can be identified without reconstruction still remains an open question.

We introduce SplitJEPA, a JEPA that provably learns separate invariant and variant representations of a latent world without reconstruction. Our key insight is that prediction in representation space and consistency across invariant pairs resolve distinct levels of latent ambiguity. We achieve prediction in latent space, while block consistency under sufficient variation resolves the mixing between invariant and variant factors. We establish conditions under which \modelname identifies both latent blocks up to within-block orthogonal transformations, using only predictive and invariant correspondences without reconstructing observations. We further show that downstream predictions based on the invariant block are insensitive to variant perturbations when other inputs remain fixed. These results guide a framework that combines latent prediction, invariant-block agreement, and representation normalization. Experiments on controlled nonlinear systems and robotic manipulation tasks validate the proposed theory and demonstrate principled block recovery and improved downstream control.

\begin{figure}[!t]
    \centering
    \includegraphics[width=\textwidth]{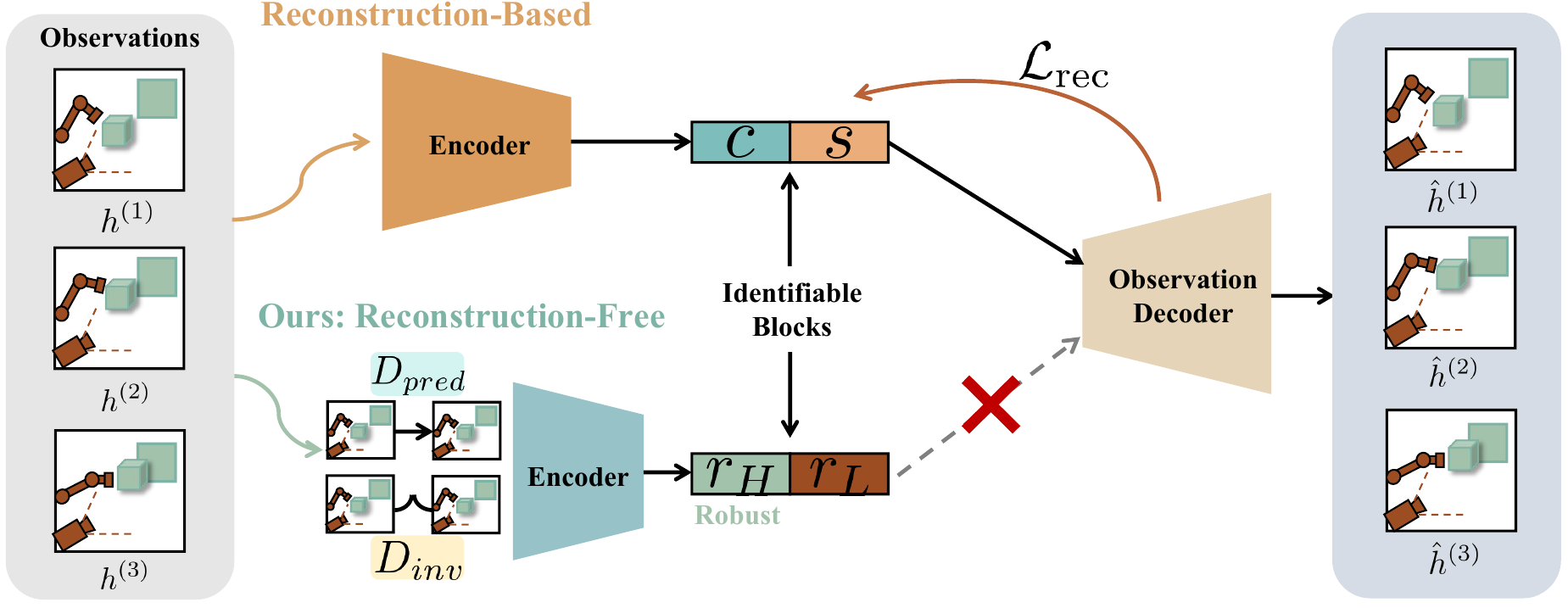}
    \setlength{\belowcaptionskip}{-15pt}
    \caption{\textbf{Block identifiability without observation reconstruction.}
Reconstruction-based identifiable models recover structured latent factors through an encoder-decoder pathway together with additional structural assumptions. \modelname combines prediction in latent space with block consistency to recover invariant and variant structure directly in representation space.}
    \label{fig:intro_comparison}
    \vspace*{-5pt}
\end{figure}
\vspace*{-10pt}

\section{Problem Background and Formulation}
\label{sec:background and formulation}
We consider a latent world observed through an unknown nonlinear process, 
\begin{equation}
h=f(z),\qquad z=\begin{bmatrix}z_H\\z_L\end{bmatrix}\in\mathbb R^d,\qquad d=d_H+d_L,
\end{equation} 
where $z_H\in\mathbb R^{d_H}$ and $z_L\in\mathbb R^{d_L}$ denote factors that remain shared and factors that may change across related observations. An encoder $\phi:\mathcal H\rightarrow\mathbb R^d$ induces
\begin{equation}
r=\phi(h)=g(z),\qquad g=\phi\circ f,\qquad
r=\begin{bmatrix} r_H\\r_L \end{bmatrix}.
\end{equation}
We study whether $r_H$ and $r_L$ recover the corresponding latent blocks without reconstructing $h$.

Following nonlinear identifiability settings, we take $f$ to be a smooth embedding on the connected latent support and $g$ to be continuously differentiable~\citep{kugelgen2021selfsupervised,daunhawer2023identifiabilityresultsmultimodalcontrastive}. This preserves latent-state information while leaving its parameterization undetermined, the central ambiguity of nonlinear
latent-variable models~\citep{hyvarinen1999nonlinear}.

\paragraph{Predictive structure.}
Temporal and contrastive identifiability results constrain latent parameterizations through relations between
observations~\citep{hyvarinen2016unsupervised,hyvarinen2017nonlinear,zimmermann2022contrastivelearninginvertsdata}. For LeJEPA, this connection yields linear recovery under stationary Gaussian latent dynamics~\citep{klindt2026doeslejepalearnworld,balestriero2025lejepaprovablescalableselfsupervised}. We build on this predictive setting:
\begin{equation}
\label{eq:predictive_dynamics}
z^{(1)}\mid X\sim\mathcal N(0,I_d),\qquad z^{(2)}=\rho_X z^{(1)}+\sqrt{1-\rho_X^2}\,\epsilon,
\end{equation}
where $0<\rho_X<1$, $\epsilon\mid X\sim\mathcal N(0,I_d)$,
and $\epsilon\perp z^{(1)}\mid X$. We denote this distribution by $\mathcal D_{\mathrm{pred}}$. The context $X$ controls the predictive correlation, while both states have the same marginal distribution.

\paragraph{Invariant correspondences.} Content-style learning supplies a complementary distinction: paired views preserve content while style may change~\citep{kugelgen2021selfsupervised}.
Multiview and multimodal identifiability results similarly use shared factors to link observations with view-specific
variation~\citep{lyu2022understanding,daunhawer2023identifiabilityresultsmultimodalcontrastive}.  Following this principle, let $\mathcal D_{\mathrm{inv}}$ denote a distribution over pairs
\begin{equation}
    z^{(1)}=(z_H,z_L^{(1)}),\qquad
z^{(2)}=(z_H,z_L^{(2)}).
\end{equation}
The learner observes only $\bigl(f(z^{(1)}),f(z^{(2)})\bigr)$. The invariance is specific to this correspondence: $z_H$ is shared within an invariant pair but may change across predictive pairs. Whereas $\mathcal D_{\mathrm{pred}}$
constrains predictive dependence, $\mathcal D_{\mathrm{inv}}$ gives the partition its meaning; the extent of variation within these correspondences determines whether they suffice to identify it.

\paragraph{Identification target.} 
Prior content-style and multimodal results identify shared factors up to a within-block invertible transformation~\citep{kugelgen2021selfsupervised,daunhawer2023identifiabilityresultsmultimodalcontrastive}. We seek the block-orthogonal form
\begin{equation}
\label{eq:block_target}
    g_H(z)=A_{11}z_H,\qquad
g_L(z)=A_{22}z_L,
\end{equation}
where $A_{11}^\top A_{11}=I_{d_H}$ and
$A_{22}^\top A_{22}=I_{d_L}$. This identifies the two subspaces while allowing coordinates to mix within each block. The total latent dimension and both block dimensions are taken as given.

Recovering this organization is stronger than retaining the complete latent state.  The following construction illustrates the distinction at the level of observations alone.

\begin{theorem}[Nonlinear Latent Ambiguity]
\label{th1}
Let $h=f(z_H,z_L)$. For any smooth nonlinear function $\psi:\mathbb{R}^{d_L}\rightarrow\mathbb{R}^{d_H}$, define
$\tilde z_H=z_H+\psi(z_L),\tilde z_L=z_L$,
and $\tilde f(\tilde z_H,\tilde z_L) = f\bigl(\tilde z_H-\psi(\tilde z_L),\tilde z_L\bigr)$.
Then $\widetilde f(\widetilde z)=f(z)$, while $\frac{\partial\widetilde z_H}{\partial z_L}=J_\psi(z_L)$.
Whenever this derivative is nonzero, the alternative first block depends on the variant factor. 
\end{theorem}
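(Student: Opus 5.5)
The proof is a direct verification by substitution, so the plan is to check the two claimed identities in turn and then read off the interpretive conclusion.

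\textbf{Observational equivalence.} Fix any latent point $z=(z_H,z_L)$ and form $\tilde z=(\tilde z_H,\tilde z_L)$ with $\tilde z_H=z_H+\psi(z_L)$ and $\tilde z_L=z_L$. Substituting into the definition of $\tilde f$ gives
\[
\tilde f(\tilde z)=f\bigl(z_H+\psi(z_L)-\psi(z_L),\,z_L\bigr)=f(z_H,z_L)=f(z),
\]
where we use $\tilde z_L=z_L$ inside $\psi$. I will also note that the map $T:z\mapsto\tilde z$ is a smooth bijection of $\mathbb R^{d_H}\times\mathbb R^{d_L}$, with smooth inverse $(\tilde z_H,\tilde z_L)\mapsto(\tilde z_H-\psi(\tilde z_L),\tilde z_L)$. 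Its Jacobian is block unit-triangular,
\[
J_T=\begin{bmatrix} I_{d_H} & J_\psi(z_L)\\ 0 & I_{d_L}\end{bmatrix},
\]
so $\det J_T=1$. Hence $\tilde f=f\circ T^{-1}$ is again a smooth embedding on the image $T(\mathcal Z)$ of the latent support $\mathcal Z$. That image is connected, because $T$ is continuous and $\mathcal Z$ is connected. Consequently the pair $(\tilde f,\tilde z)$ lies in the same model class as $(f,z)$ and produces identical observations $h$. Since the two pairs agree pointwise, they induce the same pushforward distribution of $h$ under any latent distribution, transported by $T$.

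\textbf{Jacobian claim.} Differentiating $\tilde z_H=z_H+\psi(z_L)$ with respect to $z_L$ gives $\partial\tilde z_H/\partial z_L=J_\psi(z_L)$, since $z_H$ does not depend on $z_L$. Where $J_\psi(z_L)\neq 0$, the coordinate $\tilde z_H$ varies with $z_L$ while $z_H$ is held fixed. The alternative first block is therefore not a function of $z_H$ alone and mixes in the variant factor, even though the observations are unchanged.

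The only step needing care is the model-class check: that $\tilde f$ is still an admissible smooth embedding on a connected support. The triangular Jacobian with unit determinant together with the explicit smooth inverse settles this. With that in place, the theorem shows that observations alone cannot fix the invariant--variant partition.
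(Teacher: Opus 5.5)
Your proposal is correct and follows essentially the same route as the paper. Both arguments verify $\tilde f(\tilde z)=f(z)$ by substitution through the explicit smooth inverse of the shear map $z\mapsto(z_H+\psi(z_L),z_L)$, note its unit-triangular Jacobian, and read off $\partial\tilde z_H/\partial z_L=J_\psi(z_L)$ from the off-diagonal block. Your explicit check that $\tilde f=f\circ T^{-1}$ stays a smooth embedding on a connected support is a small addition that the paper leaves implicit; the paper instead states equivalence through the pushforward $P_{\tilde z}=(V_{\mathrm{leak}})_\# P_z$.
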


\begin{wrapfigure}{r}{0.48\textwidth}
    \centering
    \includegraphics[width=\linewidth]{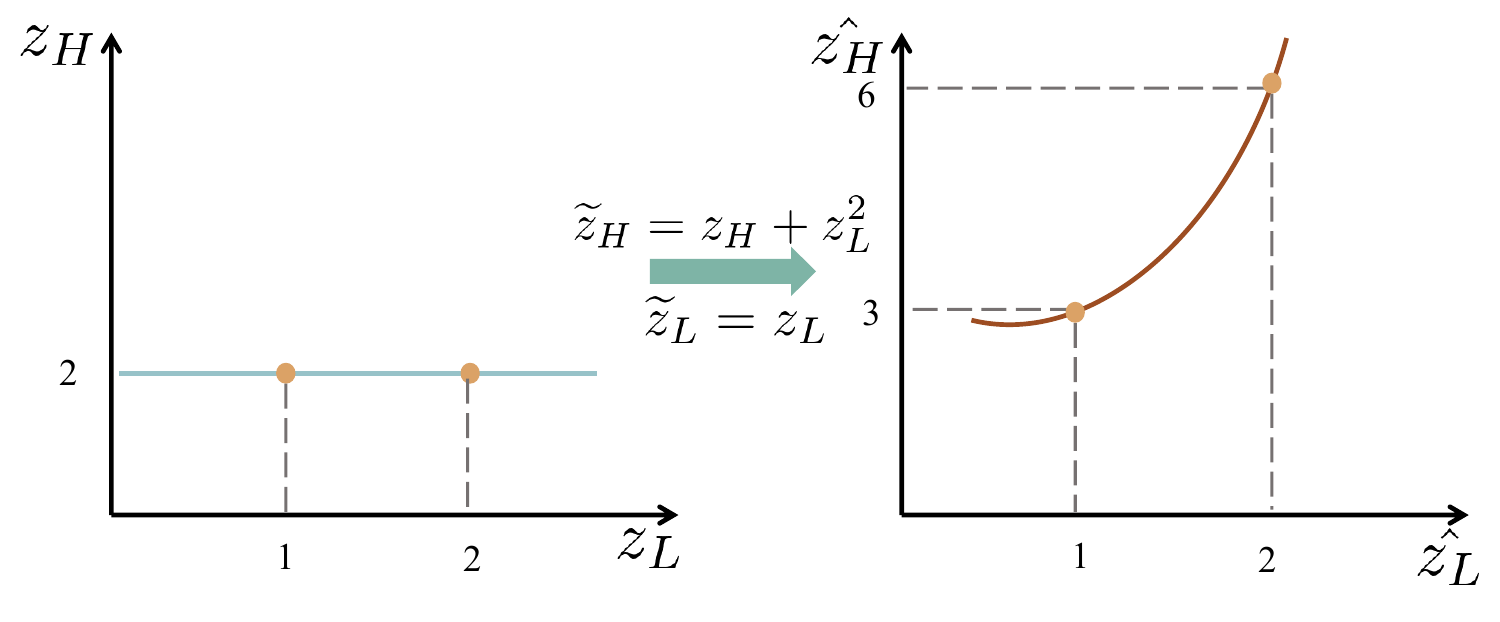}
    \caption{Nonlinear mixing of the latent blocks.}
    \label{fig:example1}
    \vspace{-30pt}
\end{wrapfigure}

\paragraph{Structure.} Letting $\tilde z=V_{\mathrm{leak}}(z)$, the transformation has Jacobian $J_{V_{\mathrm{leak}}}(z)
=
\begin{bmatrix}
I_{d_H} & J_\psi(z_L)\\
0 & I_{d_L}
\end{bmatrix}$. Its off-diagonal block introduces variant information into the first block, while the explicit inverse preserves the complete latent state.

\begin{example} Let $d_H=d_L=1$ and $\psi(z_L)=z_L^2$. The states $(z_H,z_L)=(2,1)$ and $(2,2)$ share the same invariant $z_H=2$. After the transformation, they become $(\tilde z_H,\tilde z_L)=(3,1)$ and $(6,2)$. Thus, the transformed first coordinate varies with $z_L$, though the original invariant factor is unchanged.
\end{example}

\begin{proposition}[Ambiguity under the Basic Setup]
\label{prop1}
Under the smooth observation model above, the invariant-variant decomposition is not identifiable from observations alone. The parameterizations $(f,z)$ and $(\tilde f,\tilde z)$ in Theorem~\ref{th1} are observationally equivalent but are not related by independent within-block transformations when $J_\psi$ is nonzero somewhere on the latent support. Consequently, information preservation does not imply recovery of the block-structured target.
\end{proposition}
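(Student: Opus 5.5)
The plan is to build on Theorem~\ref{th1} and argue in two parts: first that the two parameterizations are observationally equivalent, and then that no pair of independent within-block maps can relate them.

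\textbf{Observational equivalence.} Set $V_{\mathrm{leak}}(z_H,z_L)=(z_H+\psi(z_L),z_L)$. Its explicit inverse is $(\tilde z_H,\tilde z_L)\mapsto(\tilde z_H-\psi(\tilde z_L),\tilde z_L)$, and both maps are smooth, so $V_{\mathrm{leak}}$ is a diffeomorphism of $\mathbb R^d$. Then $\tilde f=f\circ V_{\mathrm{leak}}^{-1}$ is again a smooth embedding on the image support, which is connected because it is a continuous image of a connected set. By Theorem~\ref{th1}, $\tilde f(\tilde z)=f(z)$ pointwise, so the observation $h$ coincides realization by realization. In particular, the distribution of $h$ is identical whenever $\tilde z$ is given the pushforward law of $z$. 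This holds for single observations and also for any pairing, such as $(f(z^{(1)}),f(z^{(2)}))$. I will note the scope: the proposition concerns ``observations alone'' under the basic smooth model, without the Gaussian predictive constraint. The pushforward law need not be Gaussian, so no conflict with later results arises.

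\textbf{Failure of block-wise relation.} Suppose, for contradiction, that $\tilde z=(T_H(z_H),T_L(z_L))$ for some maps $T_H,T_L$ on the support. Comparing first blocks gives $z_H+\psi(z_L)=T_H(z_H)$ for all $(z_H,z_L)$ in the support. Differentiating in $z_L$ at an interior point gives $J_\psi(z_L)=\partial T_H/\partial z_L=0$. This contradicts the choice of a point $z_L^\ast$ with $J_\psi(z_L^\ast)\neq 0$.

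If differentiability of $T_H$ is not assumed, I will avoid derivatives. Since $J_\psi(z_L^\ast)\neq0$, $\psi$ is nonconstant near $z_L^\ast$, so there are $z_L,z_L'$ with $\psi(z_L)\ne\psi(z_L')$ and a common $z_H$ with both $(z_H,z_L)$ and $(z_H,z_L')$ in the support. The identity $z_H+\psi(z_L)=T_H(z_H)=z_H+\psi(z_L')$ is then a contradiction. The Example illustrates exactly this: $(2,1)$ and $(2,2)$ map to first coordinates $3$ and $6$.

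\textbf{Main obstacle.} The main obstacle is the support condition: I need a common $z_H$ paired with two such $z_L$ values. I plan to handle it by assuming the support contains an open product neighborhood around $(z_H^\ast,z_L^\ast)$, which holds for the Gaussian or full-support settings used in the paper.

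The final conclusion then follows directly. Both parameterizations preserve the full latent state, since each is an invertible reparameterization of the other. Yet $\tilde z_H$ depends on $z_L$, so it fails the block target in \eqref{eq:block_target}. Hence information preservation does not imply recovery of the block-structured target.
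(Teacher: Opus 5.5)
Your proposal is correct and follows the paper's route: the same leakage diffeomorphism $V_{\mathrm{leak}}$, the alternative map $\tilde f=f\circ V_{\mathrm{leak}}^{-1}$, and the pushforward latent law give observational equivalence exactly as in the appendix proof of Theorem~\ref{th1}. Your explicit contradiction against arbitrary within-block maps $(T_H,T_L)$, together with the open product-neighborhood condition on the support, makes rigorous what the paper leaves to the nonzero off-diagonal Jacobian block $J_\psi(z_L)$.
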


This ambiguity concerns observation equivalence, where the transformed parameterization need not preserve the relational structure specified above for all such admissible transformations.
Recent work similarly studies which latent structures remain
recoverable when full nonlinear identification is unavailable, using diversity in latent-observation dependencies as an identifying signal~\citep{zheng2026diverse}. Our setting uses a
different source of structure: prediction first reduces the ambiguity to a global orthogonal transformation, and variation across correspondences identifies its block partition.

\section{SplitJEPA: Learning Invariant and Variant Latent Worlds without Reconstruction}
\modelname resolves two levels of ambiguity: predictive identification recovers the complete latent state, and sufficiently diverse correspondences determine its invariant-variant partition. We first characterize the ambiguity remaining after predictive identification (Section~\ref{sec:predictive identification}), then show how sufficient variation resolves the latent partition (Section~\ref{sec:block identification}). We subsequently discuss the resulting robustness (Section~\ref{sec:perturbation robustness}) and formulate the estimation conditions (Section~\ref{sec:estimation}).

\subsection{Predictive Identification and the Remaining Ambiguity}
\label{sec:predictive identification}

JEPA learns from statistical relations between observations without reconstructing
them~\citep{assran2023selfsupervisedlearningimagesjointembedding,bardes2024revisiting}. LeJEPA provides a reconstruction-free instance of this principle~\citep{balestriero2025lejepaprovablescalableselfsupervised}. Under the predictive setting in Section~\ref{sec:background and formulation}, its identifiability analysis characterizes global latent recovery through a linear representation~\citep{klindt2026doeslejepalearnworld}. For centered and whitened population solutions, this takes the
form $g(z)=Qz$, where $Q^\top Q=I_d$. This result excludes the nonlinear mixing in Theorem~\ref{th1}, but a global rotation may still mix the two latent blocks.

\begin{proposition}[Block Ambiguity of LeJEPA]
\label{prop2}
Suppose a representation belongs to the globally identified equivalence class, so that $g(z)=Qz, Q^\top Q=I_d$. For any orthogonal matrix $R\in\mathbb R^{d\times d}$, the representation $\tilde g(z)=Rg(z)$ belongs to the same equivalence class. Since this class contains transformations with nonzero cross-block components, predictive identification alone does not determine the
invariant-variant partition.
\end{proposition}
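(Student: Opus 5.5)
The argument has three steps: closure of the class, an explicit cross-block rotation, and a check that such a rotation breaks the target form in \eqref{eq:block_target}.

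\emph{Closure.} Let $g(z)=Qz$ with $Q^\top Q=I_d$, and let $R$ be orthogonal. Then
\[
\tilde g(z)=Rg(z)=(RQ)z,
\qquad
(RQ)^\top(RQ)=Q^\top R^\top R Q=I_d .
\]
So $\tilde g$ is again an orthogonal linear map of $z$ and lies in the globally identified equivalence class. I would also note why no term of the predictive objective can tell $g$ and $\tilde g$ apart:
\begin{itemize}
\item Centering is preserved, since $\mathbb E[\tilde g(z)]=RQ\,\mathbb E[z]=0$.
\item Whitening is preserved, since $\mathrm{Cov}(\tilde g(z))=RQQ^\top R^\top=I_d$.
\item The predictive relation is preserved. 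Under \eqref{eq:predictive_dynamics}, $\tilde g(z^{(2)})=\rho_X\tilde g(z^{(1)})+\sqrt{1-\rho_X^2}\,RQ\epsilon$, and the noise $RQ\epsilon$ is still $\mathcal N(0,I_d)$ and independent of $z^{(1)}$ given $X$.
\end{itemize}
Hence every statistic available to predictive identification, including the LeJEPA loss with its isotropic Gaussian regularizer, is invariant under left multiplication by $R$.

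\emph{Explicit cross-block rotation.} Partition $M=RQ$ conformally with $(z_H,z_L)$. Without loss of generality take $Q=I_d$, which is admissible because it already has the target form. Choose $R$ to be a Givens rotation by an angle $\theta\in(0,\pi/2)$ in the plane spanned by $e_1$ (in the $H$ block) and $e_{d_H+1}$ (in the $L$ block). Then
\[
\tilde g_H(z)=M_{11}z_H+M_{12}z_L,\qquad M_{12}\neq 0,
\]
with $\sin\theta$ in the $(1,1)$ entry of $M_{12}$. In particular, $\partial \tilde g_H/\partial z_L=M_{12}\neq0$.

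\emph{Violation of the block target.} Suppose $\tilde g_H(z)=A_{11}z_H$ for some $A_{11}$. Its derivative with respect to $z_L$ would then be zero, which contradicts $M_{12}\neq 0$. Equivalently, take two states sharing $z_H$ but with different $z_L$, exactly as in the Example after Theorem~\ref{th1}. They receive different $\tilde g_H$ values, so the invariant coordinate leaks variant information.

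This gives two representations, $g$ and $\tilde g$, that are indistinguishable under $\mathcal D_{\mathrm{pred}}$. One has the form \eqref{eq:block_target}; the other is not related to it by any independent within-block transformation. Therefore predictive identification alone does not determine the partition. The general class of such ambiguities is the set of $R$ outside $O(d_H)\times O(d_L)$.

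The only delicate point is making precise that the full predictive objective, not just the form $Qz$, is $R$-invariant, so that $\tilde g$ is genuinely a population solution. I would handle this by invoking rotational invariance of the isotropic Gaussian together with the distributional check in the closure step.
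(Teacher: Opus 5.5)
Your proposal is correct and follows the paper's own brief justification (stated right after the proposition, with no separate appendix proof): orthogonal maps preserve centering, covariance, distances and the isotropic predictive structure. You make this explicit through $(RQ)^\top(RQ)=I_d$, the invariance of $\mathcal L_{\mathrm{pred}}$ under left multiplication by $R$, and a Givens rotation that exhibits $M_{12}\neq 0$, which is exactly the witness the paper leaves implicit.

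The only slip is your ``equivalently'' remark. Two states sharing $z_H$ receive different $\tilde g_H$ only if their $z_L$-difference lies outside $\ker M_{12}$; for your rank-one Givens block, that means they must differ in the first variant coordinate. Your derivative argument does not depend on that remark, so the proof stands.
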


Orthogonal transformations preserve distances, covariance, and the isotropic predictive structure. As a result, global recovery does not select the orientation associated with the correspondence defining $z_H$. Nevertheless, reducing the original nonlinear ambiguity to an unknown linear transformation substantially simplifies the remaining problem.

\begin{proposition}[Recovery up to Orthogonal Transformation]
Under global predictive identification in the centered and whitened representation class, the learned partition admits
\label{prop3}
\begin{equation}
\label{eq:linear_partition}
r=Qz
=
\begin{bmatrix}
A_{11} & A_{12}\\
A_{21} & A_{22}
\end{bmatrix}
\begin{bmatrix}
z_H\\
z_L
\end{bmatrix},
\qquad Q^\top Q=I_d.
\end{equation}
The complete latent state is linearly recoverable as $z=Q^\top r$. The block-orthogonal target in Equation~\ref{eq:block_target}
holds if and only if $A_{12}=0$ and $A_{21}=0$.
\end{proposition}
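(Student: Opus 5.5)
The plan is to take the global predictive identification result for LeJEPA \citep{klindt2026doeslejepalearnworld} as given. In the centered and whitened class it yields $g(z)=Qz$ with $Q^\top Q=I_d$. The proposition then reduces to three elementary linear-algebra claims.

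\emph{Block form.} Partition $Q$ conformally with $z=(z_H,z_L)$ and $r=(r_H,r_L)$. The rows split into $d_H$ and $d_L$, and the columns split the same way. Writing $Q=\begin{bmatrix}A_{11}&A_{12}\\A_{21}&A_{22}\end{bmatrix}$, block multiplication gives
\[
r_H=A_{11}z_H+A_{12}z_L,\qquad r_L=A_{21}z_H+A_{22}z_L,
\]
which is Equation~\ref{eq:linear_partition}.

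\emph{Recovery.} Since $Q$ is square with $Q^\top Q=I_d$, it is invertible with $Q^{-1}=Q^\top$. Hence $Q^\top r=Q^\top Qz=z$, so the full latent state is linearly recoverable.

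\emph{The equivalence.}
\begin{itemize}
\item[] ($\Leftarrow$) Suppose $A_{12}=0$ and $A_{21}=0$. Then $Q=\mathrm{diag}(A_{11},A_{22})$, so $Q^\top Q=\mathrm{diag}(A_{11}^\top A_{11},A_{22}^\top A_{22})=I_d$. This forces $A_{11}^\top A_{11}=I_{d_H}$ and $A_{22}^\top A_{22}=I_{d_L}$. Moreover $g_H(z)=A_{11}z_H$ and $g_L(z)=A_{22}z_L$, which is exactly Equation~\ref{eq:block_target}.
\item[] ($\Rightarrow$) Suppose $g_H(z)=A'_{11}z_H$ and $g_L(z)=A'_{22}z_L$ for some matrices $A'_{11},A'_{22}$, for all $z$ in the latent support. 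The identity $A_{11}z_H+A_{12}z_L=A'_{11}z_H$ is linear in $z$. The Gaussian support of $\mathcal D_{\mathrm{pred}}$ is all of $\mathbb R^d$, or at least has nonempty interior, so the coefficient matrices must agree. Hence $A_{12}=0$ and $A_{11}=A'_{11}$, and symmetrically $A_{21}=0$ and $A_{22}=A'_{22}$. The orthogonality of the diagonal blocks then follows as in the first direction.
\end{itemize}

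The only step needing care is this forward direction. One must argue that a linear identity holding on the latent support forces equality of the coefficient matrices, and this requires the support to span $\mathbb R^d$. The full-rank Gaussian marginal $\mathcal N(0,I_d)$ guarantees it. I would note this explicitly, for instance by differentiating the identity with respect to $z_L$ on the open support to read off $A_{12}=0$.

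Otherwise the proof is routine. It simply formalizes that the only obstruction left after Proposition~\ref{prop2} is the off-diagonal blocks of $Q$, which is the quantity the later block-identification results must drive to zero.
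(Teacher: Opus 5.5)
Your proof is correct and matches the paper's reasoning: the paper gives no standalone proof, but in the proof of Theorem~\ref{th2} it derives $g(z)=Qz$ (Step~1, via Hermite degree attenuation plus $\operatorname{Cov}(g(z))=QQ^\top=I_d$), which the phrase ``under global predictive identification'' entitles you to take as given, and then does the same block bookkeeping (Step~3); your coefficient-matching argument for the forward direction is a detail the paper leaves implicit. One point from the paper's Step~3 worth keeping is that orthogonality makes the two conditions dependent---$A_{12}=0$ alone gives $A_{11}A_{11}^\top=I_{d_H}$ and then $A_{11}A_{21}^\top=0$, so $A_{21}=0$ (indeed $\|A_{12}\|_F=\|A_{21}\|_F$ for any orthogonal $Q$)---which is why Theorem~\ref{th2} needs consistency only on the invariant block.
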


It locates the remaining ambiguity in
$A_{12}$ and $A_{21}$. Predictive recovery preserves the complete state, but it does not determine which directions should remain unchanged under the correspondence defining $z_H$. The question is whether observed changes suffice to determine the desired partition.

\subsection{Identifying the Partition through Variation}
\label{sec:block identification}
\begin{figure}[t]
    \centering
    \includegraphics[width=\textwidth]{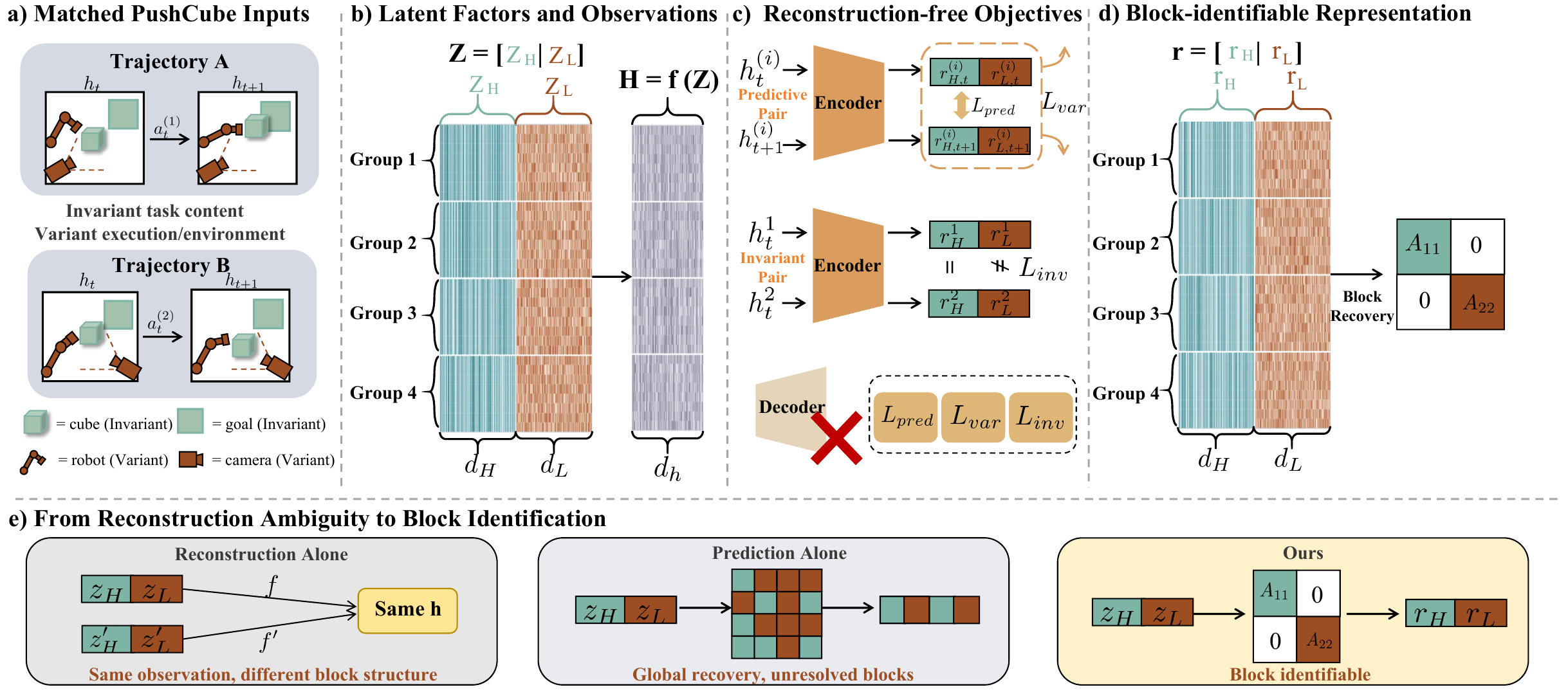}
    \caption{ \textbf{Overview of reconstruction-free block identifiability.}
    (a) Matched PushCube trajectories share invariant task content while varying in execution and environmental factors.
    (b) The ground-truth latent state consists of invariant and variant blocks that jointly generate the observations.
    (c) Predictive alignment and variance preservation identify the complete latent state, while block consistency determines its partition without an observation decoder.
    (d) The resulting representation recovers the invariant and variant subspaces.
    (e) Reconstruction alone permits observationally equivalent latent parameterizations, and prediction alone leaves a global orthogonal ambiguity; additional consistency structure resolves this ambiguity and yields reconstruction-free block identification.}
    \label{fig:theory_overview}
    \vspace{-17pt}
\end{figure}

Once the representation is globally linear, agreement across correspondences constrains its dependence on the variant state. The strength of this constraint depends on which directions actually change. We formalize the required coverage below.
\begin{assumption}[Sufficient Variation]
\label{assump:sufficient_variation}
For $(z^{(1)},z^{(2)})\sim\mathcal D_{\mathrm{inv}}$, let $\Delta z_L=z_L^{(1)}-z_L^{(2)}$. The second-moment matrix of the variant changes satisfies
\begin{equation}
\Sigma_{\Delta L}
=
\mathbb E_{\mathcal D_{\mathrm{inv}}}
\left[ \Delta z_L\Delta z_L^\top \right] \succ0.
\end{equation}
\end{assumption}
Positive definiteness means that $\mathbb E[(v^\top\Delta z_L)^2]>0$ for every nonzero
$v\in\mathbb R^{d_L}$: no linear direction in the variant block remains unchanged across all available pairs. A single pair need not vary every coordinate, and the coordinates of $\Delta z_L$ need not be independent.

\paragraph{Intuition.}
Predictive learning first recovers the complete latent state up to a global orthogonal transformation $g^\star(z)=Qz$. After partitioning $Q$ according to the invariant and variant latent blocks, the off-diagonal matrices $A_{12}$ and $A_{21}$ describe the remaining cross-block mixing. For matched pairs sharing $z_H$, the invariant-representation difference is $g_H^\star(z^{(1)})-g_H^\star(z^{(2)})=A_{12}\Delta z_L$. If the variant differences have a full-rank second moment, agreement of $g_H^\star$ across the matched pairs forces $A_{12}=0$. The orthogonality of $Q$ then forces $A_{21}=0$, leaving a block-diagonal transformation.

\begin{theorem}[Reconstruction-free Block Identifiability]
\label{th2}
Under the predictive model in Equation~\ref{eq:predictive_dynamics} and
Assumption~\ref{assump:sufficient_variation}, let $g^\star$ be a global population minimizer of
$\mathcal L_{\mathrm{pred}}$ over the centered and whitened representation class specified in  Section~\ref{sec:estimation}.
If $g_H^\star(z^{(1)})=g_H^\star(z^{(2)})$ almost surely under $\mathcal D_{\mathrm{inv}}$, then
\begin{equation}
g^\star(z)=Qz
=
\begin{bmatrix}
A_{11} & 0\\
0 & A_{22}
\end{bmatrix}
\begin{bmatrix}
z_H\\
z_L
\end{bmatrix},
\end{equation}
where $A_{11}^\top A_{11}=I_{d_H}$ and
$A_{22}^\top A_{22}=I_{d_L}$. Hence, the invariant and variant latent subspaces are
identified up to independent within-block orthogonal transformations without reconstruction.
\end{theorem}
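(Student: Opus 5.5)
The proof has two stages. Predictive identification first reduces $g^\star$ to an orthogonal linear map. The invariance constraint together with Assumption~\ref{assump:sufficient_variation} then kills one off-diagonal block, and orthogonality kills the other.

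\textbf{Stage 1: linearity.} We invoke the LeJEPA identifiability result recalled in Section~\ref{sec:predictive identification}. Under Equation~\ref{eq:predictive_dynamics}, any global population minimizer of $\mathcal L_{\mathrm{pred}}$ in the centered and whitened class has the form $g^\star(z)=Qz$ with $Q^\top Q=I_d$. We take this as given, as the paper does. By Proposition~\ref{prop3}, we partition $Q$ into blocks $A_{11},A_{12},A_{21},A_{22}$ conformally with $(z_H,z_L)$. The identity $g^\star=Qz$ holds on the latent support. Since the support is connected and full-dimensional for the Gaussian model, it extends to pairs drawn from $\mathcal D_{\mathrm{inv}}$. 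This assumes the invariant pairs live on the same support, a point we state explicitly.

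\textbf{Stage 2: $A_{12}=0$.} For a pair $(z^{(1)},z^{(2)})\sim\mathcal D_{\mathrm{inv}}$ sharing $z_H$, we have
\[
g_H^\star(z^{(1)})-g_H^\star(z^{(2)})=A_{11}z_H+A_{12}z_L^{(1)}-A_{11}z_H-A_{12}z_L^{(2)}=A_{12}\Delta z_L .
\]
The hypothesis makes this vanish almost surely, so $A_{12}\Delta z_L\Delta z_L^\top A_{12}^\top=0$ almost surely. Taking expectations gives $A_{12}\Sigma_{\Delta L}A_{12}^\top=0$. Hence $\operatorname{tr}(A_{12}\Sigma_{\Delta L}A_{12}^\top)=\sum_i a_i^\top\Sigma_{\Delta L}a_i=0$, where $a_i^\top$ are the rows of $A_{12}$. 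Since $\Sigma_{\Delta L}\succ0$, every $a_i=0$, so $A_{12}=0$.

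\textbf{Stage 3: $A_{21}=0$ and the block orthogonality relations.} We use $Q^\top Q=I_d$, which for a square $Q$ is equivalent to $QQ^\top=I_d$. With $A_{12}=0$, the top-left block of $QQ^\top$ reads $A_{11}A_{11}^\top=I_{d_H}$. Because $A_{11}$ is square, this implies $A_{11}^\top A_{11}=I_{d_H}$. The top-left block of $Q^\top Q$ is $A_{11}^\top A_{11}+A_{21}^\top A_{21}=I_{d_H}$, which then forces $A_{21}^\top A_{21}=0$, i.e.\ $A_{21}=0$. The bottom-right block of $Q^\top Q$ now reads $A_{22}^\top A_{22}=I_{d_L}$. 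This gives the block-diagonal form claimed in Theorem~\ref{th2}.

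Stages 2 and 3 are routine linear algebra. The main obstacle is Stage 1. We must make sure the cited global identification applies exactly to the minimizer class of Section~\ref{sec:estimation}, and that the linear form $Qz$ established on the predictive distribution's support also governs $g^\star$ on the support of $\mathcal D_{\mathrm{inv}}$. For the latter, I would argue that both distributions share the full-dimensional latent support on which $g$ is defined, and that $g^\star$ is a single function on that support, so the identity transfers. If the supports differed, I would require $\mathcal D_{\mathrm{inv}}$ to be absolutely continuous with respect to the $z$-marginal of $\mathcal D_{\mathrm{pred}}$.
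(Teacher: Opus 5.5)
Your Stages 2 and 3 are the paper's Steps 2 and 3. The paper also derives $A_{12}\Delta z_L=0$ a.s.\ and concludes from $\operatorname{Tr}(A_{12}\Sigma_{\Delta L}A_{12}^\top)=0$. For the other block it reads off the off-diagonal identity $A_{11}A_{21}^\top=0$ from $QQ^\top=I_d$ and inverts $A_{11}$. You use the diagonal identity $A_{11}^\top A_{11}+A_{21}^\top A_{21}=I_{d_H}$ of $Q^\top Q$ instead; the two are interchangeable.

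The real difference is Stage 1. You import it as a black box and call it the main obstacle. The paper instead proves it inside the theorem, and this is the only place where Equation~\ref{eq:predictive_dynamics} and the whitening constraint are used. The argument is short:
\begin{itemize}
\item Expand each $g_i=\sum_\alpha c_\alpha^{(i)}H_\alpha$ in the normalized Hermite basis of $\mathcal N(0,I_d)$. Centering kills the degree-$0$ term, and whitening gives $\sum_{k\ge1}w_k^{(i)}=1$ with $w_k^{(i)}=\sum_{|\alpha|=k}(c_\alpha^{(i)})^2$.
\item Because $\mathbb E\|g(z^{(1)})\|_2^2=\mathbb E\|g(z^{(2)})\|_2^2=d$ is fixed, minimizing $\mathcal L_{\mathrm{pred}}$ is the same as maximizing $\sum_i\mathbb E[g_i(z^{(1)})g_i(z^{(2)})]$.
\item The generating function of $\mathrm{He}_k$ gives $\mathbb E[\mathrm{He}_k(z_j^{(2)})\mid z_j^{(1)},X]=\rho_X^k\,\mathrm{He}_k(z_j^{(1)})$. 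Hence $\mathbb E[H_\alpha(z^{(2)})H_\beta(z^{(1)})]=\lambda_{|\alpha|}\delta_{\alpha\beta}$ with $\lambda_k=\mathbb E_X[\rho_X^k]$, and the objective equals $\sum_i\sum_{k\ge1}w_k^{(i)}\lambda_k\le d\lambda_1$.
\item Since $0<\rho_X<1$ forces $\lambda_k<\lambda_1$ for $k\ge2$, equality holds iff every $g_i$ is purely degree one. That means $g^\star(z)=Qz$ a.e., and whitening gives $QQ^\top=I_d$.
\end{itemize}
Conversely, every orthogonal $Q$ attains $d\lambda_1$, so the minimizer set is exactly the orthogonal linear maps. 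This settles your concern about whether the cited result matches the class of Section~\ref{sec:estimation}. Citing buys you brevity. The paper's route makes the theorem self-contained and shows that the strict attenuation $\rho_X<1$ is what rules out the nonlinear leakage of Theorem~\ref{th1}.

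On the transfer to $\mathcal D_{\mathrm{inv}}$: the Hermite argument yields $g^\star=Qz$ only almost everywhere under $\mathcal N(0,I_d)$. The ingredient you need is therefore continuity of $g$, which the setup assumes, not connectedness of the support. A set of full Gaussian measure is dense in $\mathbb R^d$, and two continuous maps that agree on a dense set agree everywhere. This is how the paper extends the identity to all endpoints, so no absolute-continuity condition on $\mathcal D_{\mathrm{inv}}$ is needed.
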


\paragraph{Identifying what can change.}
An invariant representation ignores every change exposed by the correspondence. However, agreement on a limited set of changes may leave other variant directions undetected. A correspondence is informative not merely because it preserves $z_H$, but because its variation covers the directions in $z_L$. Sufficient variation ensures that every nonzero linear dependence on $z_L$ is exposed somewhere in the pair distribution. This condition becomes significant after global predictive recovery: the remaining dependence on $z_L$ is described by a matrix.

\paragraph{Interpretation.}
Theorem~\ref{th2} separates recovery of the complete latent state from identification of its organization. Predictive information determines which latent information is retained, while agreement across matched correspondences determines how that information is partitioned into invariant and variant subspaces. The resulting identification is block-wise: invariant factors may remain mixed within $r_H$, and variant factors may remain mixed within $r_L$. What is excluded is mixing across the two blocks. Thus, the theorem identifies the invariant-variant decomposition up to independent within-block isometries, without requiring an observation decoder or access to the ground-truth latent variables. The proof is provided in Appendix~\ref{app:proof_th2}.

\begin{remark}[Stability under approximate invariant consistency]
The sufficient-variation condition also provides a quantitative stability guarantee when invariant consistency is approximate. Suppose that global recovery gives $g(z)=Qz$ with $Q$ orthogonal, and that the matched-pair consistency error satisfies $\mathbb{E}\!\left[\left\|g_H(z^{(1)})-g_H(z^{(2)})\right\|_2^2\right]\leq \varepsilon$. If $\Sigma_{\Delta L}=\mathbb{E}\!\left[\Delta z_L\Delta z_L^\top\right]\succeq \mu I$ for some \(\mu>0\), then $\varepsilon\geq\operatorname{tr}\!\left(A_{12}\Sigma_{\Delta L}A_{12}^\top\right)\geq\mu\|A_{12}\|_F^2$. Consequently, $\|A_{12}\|_F\leq\sqrt{\frac{\varepsilon}{\mu}}$.    
\end{remark}

\begin{example}
\label{ex:complementary_changes}
Let $z_H=c$ and $z_L=(u,v)^\top$. For fixed $c$, varying $(u,v)$ traces a two-dimensional latent sheet. One family of correspondences changes $u$, while another changes $v$, possibly at different values of $c$. Although neither spans the variant plane alone, their combined changes do. Since prediction in latent space constrains $r$ to be an orthogonal transform of $z$, agreement of $r_H$ along both directions removes its dependence on $z_L$; orthogonality assigns the complementary variant subspace to $r_L$.
\end{example}
\begin{wrapfigure}{r}{0.4\textwidth}
    \centering
    \includegraphics[width=\linewidth]{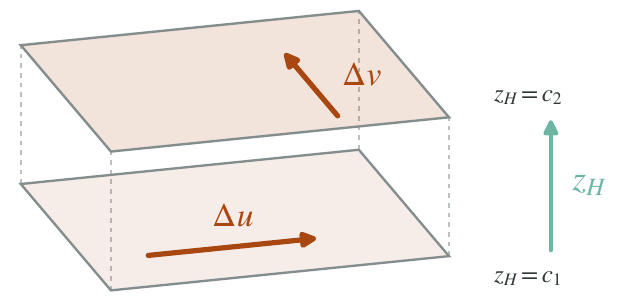}
    \caption{Complementary changes span the variant subspace.}
    \label{fig:example2}
    \vspace{-10pt}
\end{wrapfigure}

\paragraph{Learning from separate changes.}
Example~\ref{ex:complementary_changes} illustrates the role of complementary evidence. A direction that remains unchanged in one collection may change in another. The differences determine which directions can belong to the invariant representation. A single difference generally rules out only a subset of possible cross-block transformations. Different pairs contribute different constraints. What matters is whether the collection of differences spans the variant subspace. This condition is relevant in practical settings. For example, in robot manipulation, if viewpoint and lighting always change in a fixed linear combination, another combination may remain
unchanged and therefore indistinguishable from invariant content. Thus, the condition concerns the directions revealed by the changes and their aggregate coverage.

\paragraph{Why sufficient variation is plausible.}
Complementary changes can arise from combining repeated measurements of invariant content. In a robotic environment, observations from different cameras and lighting conditions expose different visual directions. Matched robotic states may additionally differ in arm configuration or approach, provided that the factors assigned to $z_H$ remain fixed.  Related to this, different policies have been used to recover environmental features in identifiable skill learning~\citep{reizinger2026skill}. Here, diversity plays a more specific role: individual sources need not be sufficient, but their combined changes within correspondences that preserve $z_H$ must span the variant subspace.
The relevant diversity is therefore measured within the specified correspondence. Observations must preserve the shared state while revealing different directions of variation. The learner need not observe these latent changes directly; they only need to be present in the paired data. This also explains the role of Assumption~\ref{assump:sufficient_variation}. A variant direction that never changes across the available pairs cannot be distinguished from invariant content. 
A direction that changes only weakly provides less reliable evidence in finite samples. Sufficient variation ensures that every variant direction contributes information for determining the partition.

\subsection{Robustness under Variant Perturbations}
\label{sec:perturbation robustness}
Block identifiability is not only a representation-level guarantee; it also has an operational consequence for downstream prediction. When a downstream computation uses only the invariant representation, changes confined to the variant latent factor have no path through its representation.
\begin{theorem}[Gradient Isolation]
\label{th3}
Under the block identifiability established in Theorem~\ref{th2}, suppose a differentiable downstream predictor depends solely on the invariant representation $r_H$.
Then the prediction is locally insensitive to perturbations in $z_L$: $\frac{\partial Y}{\partial z_L}=0$. Consequently, local variant perturbations cannot propagate
through the invariant representation.
\end{theorem}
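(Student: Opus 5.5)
The plan is a direct chain-rule computation that uses the block-diagonal form from Theorem~\ref{th2}. By that theorem, the representation is
\begin{equation}
r = g^\star(z) = \begin{bmatrix} A_{11} z_H \\ A_{22} z_L \end{bmatrix}, \qquad r_H = g_H^\star(z) = A_{11} z_H .
\end{equation}
I would write the downstream output as $Y = q(r_H, w)$, where $q$ is differentiable and $w$ collects any other inputs. The phrase ``when other inputs remain fixed'' from the introduction means these other inputs do not vary with $z_L$. The argument then composes the maps $z \mapsto r_H \mapsto Y$ and differentiates with respect to $z_L$, holding $z_H$ and $w$ fixed.

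First, I would compute the Jacobian of the invariant block. Since $r_H = A_{11} z_H$ involves no $z_L$ term, we get $\partial r_H/\partial z_H = A_{11}$. The cross-block Jacobian vanishes identically,
\begin{equation}
\frac{\partial r_H}{\partial z_L} = A_{12} = 0 ,
\end{equation}
and this holds at every point of the latent support, not only on average.

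Second, I would apply the chain rule:
\begin{equation}
\frac{\partial Y}{\partial z_L} = \frac{\partial q}{\partial r_H}\bigl(r_H, w\bigr)\,\frac{\partial r_H}{\partial z_L} + \frac{\partial q}{\partial w}\,\frac{\partial w}{\partial z_L} = 0 .
\end{equation}
The first term vanishes by the previous step. The second vanishes because $w$ is held fixed, so $\partial w/\partial z_L = 0$. The key point is that $q$ does not take $r_L$ as an argument, so no term routes through $A_{22}$.

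Finally, I would turn the derivative statement into the stated robustness property. By the mean value theorem applied along the segment from $z_L$ to $z_L+\delta$, the output satisfies $Y(z_H, z_L+\delta) = Y(z_H, z_L)$. In fact $Y$ is exactly constant in $z_L$, because $r_H$ itself is. Hence perturbations confined to the variant factor cannot propagate through $r_H$.

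I expect no real technical obstacle; the only care needed is a matter of precision. The claim $\partial Y/\partial z_L = 0$ relies on the exact population identity $A_{12}=0$ from Theorem~\ref{th2}. I would state it for the identified minimizer $g^\star$, pointwise on the support. If one wanted an approximate version, I would combine the same chain rule with the preceding Remark, giving $\|\partial Y/\partial z_L\| \le \|\partial q/\partial r_H\|\,\sqrt{\varepsilon/\mu}$.
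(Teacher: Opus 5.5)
Your proposal is correct and matches the paper's proof: both read off $\partial r_H/\partial z_L = A_{12} = 0$ from the block-diagonal form in Theorem~\ref{th2} and apply the chain rule to $Y=\Psi(\mathrm{Context}, r_H)$ with the context held fixed. Your two additions are valid extensions that the paper states only informally in its surrounding discussion: exact constancy of $Y$ in $z_L$ for finite perturbations that stay in the support, and the approximate bound $\|\partial Y/\partial z_L\|\le\|\partial q/\partial r_H\|\sqrt{\varepsilon/\mu}$ obtained from the stability Remark.
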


\paragraph{Interpretation.}
Theorem~\ref{th3} gives block identifiability an operational meaning:
a predictor using $r_H$ can respond to changes in the shared state
without being affected by how the variant factors realize that state.
Indeed, $g_H(z)=A_{11}z_H$ implies that all admissible states with
the same $z_H$ produce the same invariant representation, even when their observations differ substantially. The zero gradient is the local expression of this invariance. The invariance guarantee is specific to variant changes, which do not require insensitivity to changes in $z_H$. Once $g_H$ depends only on $z_H$, a change in $z_L$ has no path into the invariant representation. The result applies to finite variant perturbations, as long as the perturbed state remains within the modeled support. 
The proof is provided in
Appendix~\ref{app:proof_th3}.

\begin{example}
\label{ex:isolated_channel}
Continuing Example~\ref{ex:complementary_changes}, fix a sheet $z_H=c$ and the downstream context, and move locally along
$z(t)=(c,u+t\delta_u,v+t\delta_v)$. The observation and $r_L$ may change, but Theorem~\ref{th2} gives $r_H(t)=A_{11}c$ along this path. Hence any predictor $Y(t)=\Psi(\mathrm{Context},r_H(t))$ has zero directional derivative at $t=0$ for every variant direction $(\delta_u,\delta_v)$.
\end{example}

\paragraph{Task-relative robustness.}
In robotic manipulation, changes in camera pose or illumination leave predictions based on $r_H$ unchanged when they affect only $z_L$. If a perturbation also modifies the shared task content, the invariance guarantee no longer applies, and the resulting prediction change depends on the magnitude of the change in the invariant factor and the sensitivity of the downstream map. This guarantee is relative to the specified correspondence: a camera change may be variant for object recognition but relevant for viewpoint estimation. Perturbations that also change $z_H$ fall outside this invariance guarantee. With imperfect block recovery, downstream sensitivity additionally depends on residual cross-block dependence and the predictor's sensitivity.

\subsection{Estimation and Normalization}
\label{sec:estimation}
We now translate these theoretical conditions into representation-space objectives. For pairs drawn from the corresponding relational distribution, write $g=\phi\circ f$, with $g(z^{(i)})=\phi(h^{(i)})$ for each observed pair. The losses are evaluated using the observed pairs alone.

\paragraph{Representation and consistency losses.}
We use
\begin{equation}
\mathcal L_{\mathrm{pred}}(g)
=
\frac12\,
\mathbb E_{\mathcal D_{\mathrm{pred}}}
\left[
\lVert g(z^{(1)})-g(z^{(2)})\rVert_2^2
\right], \quad
\mathcal L_{\mathrm{inv}}(g)
=
\mathbb E_{\mathcal D_{\mathrm{inv}}}
\left[
\lVert g_H(z^{(1)})-g_H(z^{(2)})\rVert_2^2
\right].
\label{eq:representation_losses}
\end{equation}
For estimation, predictive alignment and invariant consistency are combined with a variance-preserving regularizer:
\begin{equation}
\mathcal L_{\mathrm{repr}}(g)
=
\mathcal L_{\mathrm{pred}}(g)
+
\lambda_{\mathrm{inv}}\mathcal L_{\mathrm{inv}}(g)
+
\lambda_{\mathrm{var}}\mathcal L_{\mathrm{var}}(g),
\qquad
\lambda_{\mathrm{inv}},\lambda_{\mathrm{var}}>0.
\end{equation}
The variance regularizer discourages collapse, while $\mathcal L_{\mathrm{inv}}$ encourages agreement of the designated invariant block across matched pairs. Empirical versions replace expectations with sample averages.

\paragraph{Normalization constraint.}
The exact analysis additionally restricts the full representation under the predictive marginal to satisfy
\begin{equation}
\mathbb E[g(z)]=0,\qquad
\operatorname{Cov}(g(z))=I_d,
\qquad z\sim\mathcal N(0,I_d).
\label{eq:normalization_constraint}
\end{equation}
Finite-weight variance regularization encourages this normalized geometry but does not by itself impose exact whitening. Full whitening includes zero cross-covariance between the designated blocks. Once predictive optimality yields $g^\star(z)=Qz$, normalization gives $QQ^\top=I_d$. Since $Q$ is square, it is orthogonal, providing the geometry used to eliminate both cross-block components.

\section{Experiments}
We test two implications of the theory: whether correspondences resolve the ambiguity left by predictive recovery, and whether the resulting blocks support control in robotic manipulation under varying conditions. Synthetic experiments isolate the identification mechanism; robotic experiments evaluate a practical implementation across manipulation tasks and evaluation settings.

\subsection{Synthetic Evaluation}
\paragraph{Simulation Setup.}
We train a reconstruction-free predictive encoder on a nonlinear Gaussian OU system, and all methods share the encoder. We report Global $R^2$, Invariant and Variant $R^2$, cross-block leakage $\lVert A_{12}\rVert_F$ and $\lVert A_{21}\rVert_F$, and block purity over five seeds. Ground-truth latents are used only for evaluation. This shared-encoder comparison isolates organization of the recovered information from encoder quality. Data generation and estimation details appear in Appendix~\ref{app:simulation setting}.

\paragraph{Main Results.}
\begin{table}[t]
  \centering
  \small
  \setlength{\tabcolsep}{2.8pt}
  \caption{
    \textbf{Block recovery on the synthetic OU system.}
    All methods reuse the same frozen reconstruction-free encoder and differ only in the information available to the spectral recovery stage. Results are the mean $\pm$ standard deviation over five seeds. Global $R^2$ is approximately $0.9977\pm0.0016$ for all methods.
  }
  \label{tab:simulation-main}
    \resizebox{\textwidth}{!}{%
  \begin{tabular}{lccccc}
    \toprule
    Method
      & Invariant $R^2$
      & Variant $R^2$
      & $\lVert A_{12}\rVert_F$
      & $\lVert A_{21}\rVert_F$
      & Block Purity\\
    \midrule
    LeJEPA
      & $0.9613\pm0.0334$
      & $0.9612\pm0.0328$
      & $0.3292\pm0.2145$
      & $0.3299\pm0.2137$
      & $0.9635\pm0.0318$\\
    Shuffled Pairs
      & $0.7134\pm0.0714$
      & $0.7102\pm0.0736$
      & $1.0600\pm0.1415$
      & $1.0606\pm0.1394$
      & $0.7142\pm0.0734$\\
    \textbf{\modelname (Ours)}
      & $\mathbf{0.9979\pm0.0014}$
      & $\mathbf{0.9973\pm0.0019}$
      & $\mathbf{0.0031\pm0.0018}$
      & $\mathbf{0.0236\pm0.0021}$
      & $\mathbf{0.99993\pm0.00001}$\\
    \bottomrule
  \end{tabular}
  }
\end{table}
Table~\ref{tab:simulation-main} illustrates the distinction in
Proposition~\ref{prop2}: equally informative representations can
differ substantially in how they organize latent factors.
With the predictive encoder fixed, \modelname achieves near-perfect block recovery with little cross-block leakage, whereas LeJEPA retains mixing between the designated blocks. Shuffling the pairs disrupts this separation without changing the globally available
information. These results support the \modelname mechanism: meaningful correspondences resolve an ambiguity that global latent recovery alone leaves open.

\paragraph{Ablations and Scaling.}
\begin{figure}[t]
  \centering
  \includegraphics[width=\linewidth]{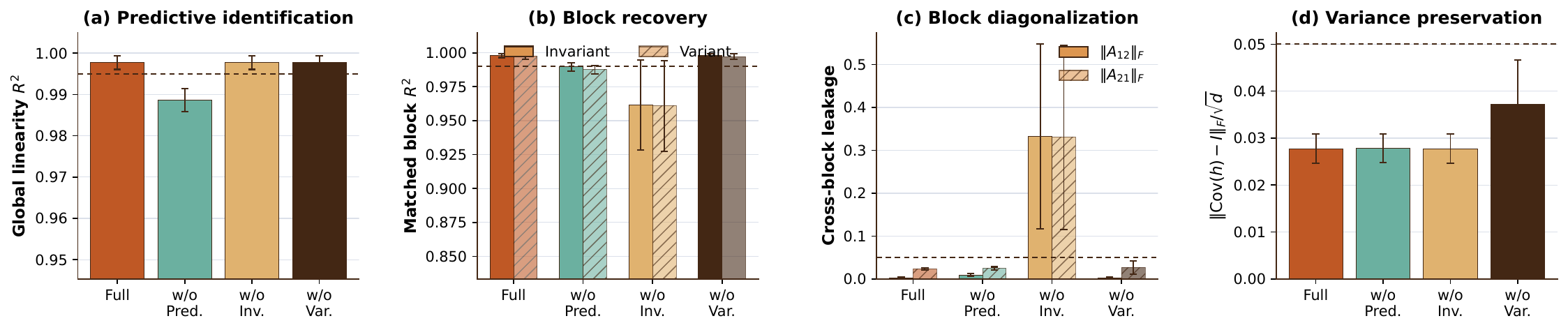}
  \caption{
    \textbf{Ablation of the predictive, invariance, and variance-preserving
    components.} Bars show mean $\pm$ standard deviation over five seeds. (a) Removing predictive alignment reduces global linear recovery. (b)--(c) Removing invariant-pair recovery leaves global identification unchanged but degrades block recovery and increases cross-block leakage. (d) Removing variance regularization and ZCA normalization increases the representation covariance error.
  }
  \vspace{-15pt}
  \label{fig:simulation-loss-ablation}
\end{figure}

Figure~\ref{fig:simulation-loss-ablation} distinguishes the roles
of prediction, correspondence and normalization. Removing prediction impairs global recovery, while removing invariant-pair recovery primarily disrupts the latent partition. Removing variance
regularization and ZCA normalization increases covariance error
and makes complementary-block recovery less stable.
Scaling illustrated in Figures~\ref{fig:scaling_global_r2}--\ref{fig:scaling_normalized_a21} reveals a corresponding distinction: more predictive training improves global recovery, whereas more recovery samples improve block separation even with the encoder fixed. Larger latent spaces make complementary-block estimation more sensitive, highlighting the finite-sample demands of recovering the full partition. Full results appear in Appendix~\ref{app:simulation}.

\subsection{Robotic Manipulation Evaluation}

\paragraph{Robotic manipulation Setup.}
We evaluate PickCube and PushCube in ManiSkill~\citep{tao2025maniskill}\footnote{\url{https://maniskill.readthedocs.io/en/latest/}} using 800 successful trajectories per task. Matched trajectories provide approximate cross-execution correspondences, while same-state base/side-camera views provide viewpoint correspondences. Behavior-cloning policies use either $r_H$ or $r_H+r_L$; we compare them with Raw, frozen R3M~\citep{nair2022rm}, and LeJEPA~\citep{balestriero2025lejepaprovablescalableselfsupervised}. We report validation action MSE and closed-loop SR@10cm under ID, camera, lighting, and combined visual shifts. Full data construction, training and evaluation details appear in Appendix~\ref{app:realworld setting}.
\vspace{-10pt}

\paragraph{Action prediction and control.}
\begin{figure}[t]
    \centering
    \begin{subfigure}[t]{0.49\textwidth}
        \centering
        \includegraphics[width=\linewidth]{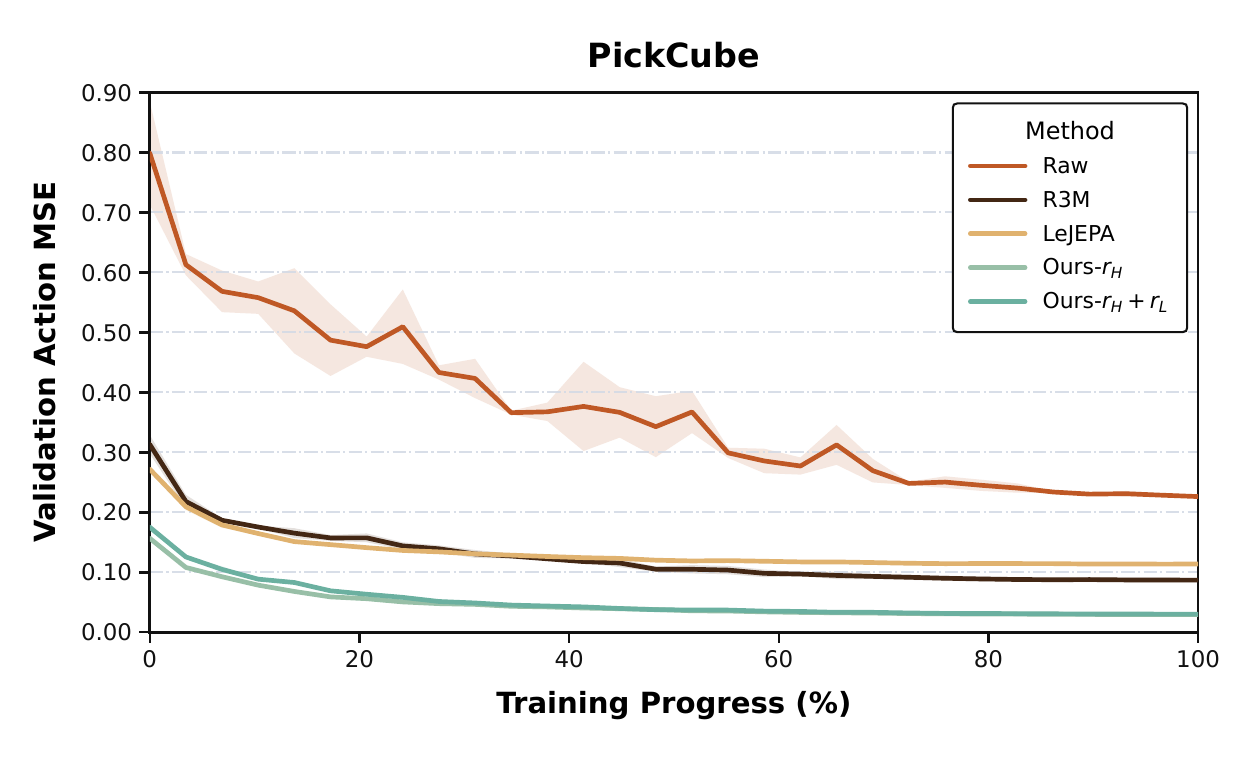}
        \label{fig:offline_mse_pickcube}
    \end{subfigure}
    \hfill
    \begin{subfigure}[t]{0.49\textwidth}
        \centering
        \includegraphics[width=\linewidth]{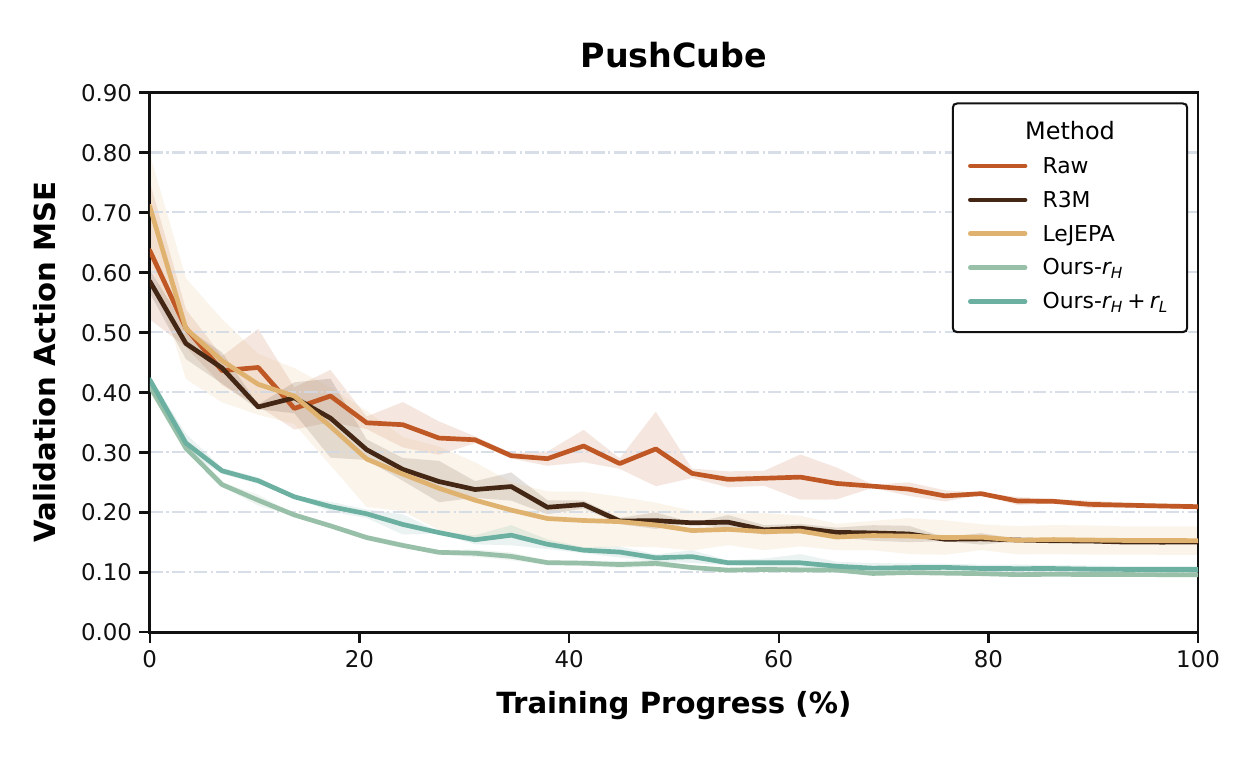}
        \label{fig:offline_mse_pushcube}
    \end{subfigure}
    \vspace{-20pt}

    \caption{\textbf{Offline action prediction performance.}
    Validation action MSE during offline behavior cloning training on PickCube and PushCube.}
    \label{fig:offline_mse}
\end{figure}

Both proposed representations improve validation action prediction over the baselines (Figure~\ref{fig:offline_mse}). Their similar errors suggest that $r_H$ retains much of the information needed
for this task, despite excluding the designated variant block.

The clearest closed-loop benefits appear under visual shifts
(Table~\ref{tab:realworld main}), where camera and lighting changes
alter observations without changing the physical control objective.
The gains are especially pronounced on PushCube; PickCube remains more challenging, indicating that separating visual variation
does not remove the demands of grasping and transport. The relative performance of $r_H$ and $r_H+r_L$ also depends on the
shift: retaining both blocks benefits some conditions.

\begin{table*}[t]
  \centering
  \caption{\textbf{Robotic manipulation  performance.}
  Results on PickCube and PushCube under ID and OOD conditions. CONTACT and GRASP are reported for ID; all OOD columns report SR@10cm.}
  \label{tab:realworld main}
  \scriptsize
  \setlength{\tabcolsep}{3.2pt}
  \renewcommand{\arraystretch}{1.05}
  \resizebox{\textwidth}{!}{%
  \begin{tabular}{lccccc|ccccc}
    \toprule
    & \multicolumn{5}{c|}{\textbf{PushCube Manipulation}}
    & \multicolumn{5}{c}{\textbf{PickCube Manipulation}} \\
    
    \cmidrule(lr){2-6} \cmidrule(lr){7-11}
    
    Method
      & \multicolumn{2}{c}{ID}
      & \multicolumn{3}{c|}{OOD}
      & \multicolumn{2}{c}{ID}
      & \multicolumn{3}{c}{OOD} \\
      
    \cmidrule(lr){2-3} \cmidrule(lr){4-6}
    \cmidrule(lr){7-8} \cmidrule(lr){9-11}
    
      & CONTACT
      & SR@10cm
      & Camera
      & Lighting
      & Visual
      & GRASP
      & SR@10cm
      & Camera
      & Lighting
      & Visual \\
    \midrule

    Raw
      & $100.0\!\pm\!0.0$
      & $98.0\!\pm\!1.0$
      & $12.8\!\pm\!6.8$
      & $98.3\!\pm\!1.5$
      & $12.7\!\pm\!7.0$
      & $23.0\!\pm\!1.6$
      & $14.0\!\pm\!1.4$
      & $10.3\!\pm\!0.5$
      & $11.7\!\pm\!0.9$
      & $10.7\!\pm\!0.5$ \\

    R3M
      & $100.0\!\pm\!0.0$
      & $97.7\!\pm\!1.2$
      & $48.0\!\pm\!7.5$
      & $90.7\!\pm\!2.1$
      & $25.8\!\pm\!1.8$
      & $73.0\!\pm\!1.6$
      & $32.0\!\pm\!1.4$
      & $14.3\!\pm\!0.5$
      & $10.0\!\pm\!0.0$
      & $10.0\!\pm\!0.0$ \\

    LeJEPA
      & $90.3\!\pm\!14.2$
      & $67.3\!\pm\!20.1$
      & $25.0\!\pm\!2.6$
      & $31.0\!\pm\!31.2$
      & $28.7\!\pm\!17.5$
      & $33.3\!\pm\!3.3$
      & $16.3\!\pm\!0.9$
      & $10.0\!\pm\!0.0$
      & $10.0\!\pm\!0.0$
      & $10.0\!\pm\!0.0$ \\

    \textbf{Ours-$r_H$}
      & $100.0\!\pm\!0.0$
      & $99.3\!\pm\!0.6$
      & $71.5\!\pm\!10.4$
      & $\mathbf{99.7\!\pm\!0.6}$
      & $64.5\!\pm\!12.7$
      & $\mathbf{97.0\!\pm\!1.4}$
      & $45.0\!\pm\!2.4$
      & $\mathbf{30.3\!\pm\!6.1}$
      & $42.7\!\pm\!4.5$
      & $28.3\!\pm\!3.9$ \\

    \textbf{Ours-$r_H+r_L$}
      & $100.0\!\pm\!0.0$
      & $99.3\!\pm\!0.6$
      & $\mathbf{75.3\!\pm\!8.1}$
      & $98.3\!\pm\!1.5$
      & $\mathbf{70.2\!\pm\!12.9}$
      & $96.7\!\pm\!0.5$
      & $\mathbf{46.3\!\pm\!4.9}$
      & $27.7\!\pm\!0.9$
      & $\mathbf{43.3\!\pm\!2.9}$
      & $\mathbf{30.3\!\pm\!1.7}$ \\

    \bottomrule
  \end{tabular}
  }
  \vspace{-10pt}
\end{table*}

\paragraph{Ablation and Robustness Experiments.}
Table~\ref{tab:representation_ablation} tests prediction in latent space, invariance and cross-trajectory diversity. Removing prediction reduces performance under camera and combined shifts, despite relatively small ID changes on PushCube. Removing invariance produces the largest PushCube degradation: Camera SR@10cm falls while ID success remains high. PickCube also loses combined-shift performance, although camera and lighting effects are less uniform. The diversity ablation reveals a task-dependent trade-off: removing the diversity mechanism is neutral or beneficial in several PushCube conditions, but reduces PickCube camera and combined-shift success for both representation inputs. In contrast, the population guarantee requires informative variation among exact invariant correspondences, and practical cross-trajectory pairing trades broader variation coverage against approximate alignment quality. Additional experiments appear in Appendix~\ref{app:realworld-robustness}.

\section{Conclusion}
We present SplitJEPA, a reconstruction-free framework for recovering invariant and variant latent structure. We show that prediction in representation space and invariant-block consistency jointly yield block identifiability under sufficient variation, and that the recovered structure isolates computations based on the invariant block from variant perturbations. Experiments on controlled nonlinear systems and robotic manipulation tasks demonstrate improved block recovery, prediction and control. Their scale is the main limitation of this work, and scaling SplitJEPA to large-scale complex real-world scenarios is an important next step. More broadly, our results suggest that representation learning can recover not only the information in a latent world but also how it is organized.

\bibliography{iclr2027_conference}

\begin{thebibliography}{33}
\providecommand{\natexlab}[1]{#1}
\providecommand{\url}[1]{\texttt{#1}}
\expandafter\ifx\csname urlstyle\endcsname\relax
  \providecommand{\doi}[1]{doi: #1}\else
  \providecommand{\doi}{doi: \begingroup \urlstyle{rm}\Url}\fi

\bibitem[Assran et~al.(2023)Assran, Duval, Misra, Bojanowski, Vincent, Rabbat, LeCun, and Ballas]{assran2023selfsupervisedlearningimagesjointembedding}
Mahmoud Assran, Quentin Duval, Ishan Misra, Piotr Bojanowski, Pascal Vincent, Michael Rabbat, Yann LeCun, and Nicolas Ballas.
\newblock Self-supervised learning from images with a joint-embedding predictive architecture.
\newblock \emph{arXiv preprint arXiv:2301.08243}, 2023.

\bibitem[Assran et~al.(2025)Assran, Bardes, Fan, Garrido, Howes, Mojtaba, Komeili, Muckley, Rizvi, Roberts, Sinha, Zholus, Arnaud, Gejji, Martin, Hogan, Dugas, Bojanowski, Khalidov, Labatut, Massa, Szafraniec, Krishnakumar, Li, Ma, Chandar, Meier, LeCun, Rabbat, and Ballas]{assran2025vjepa2selfsupervisedvideo}
Mido Assran, Adrien Bardes, David Fan, Quentin Garrido, Russell Howes, Mojtaba, Komeili, Matthew Muckley, Ammar Rizvi, Claire Roberts, Koustuv Sinha, Artem Zholus, Sergio Arnaud, Abha Gejji, Ada Martin, Francois~Robert Hogan, Daniel Dugas, Piotr Bojanowski, Vasil Khalidov, Patrick Labatut, Francisco Massa, Marc Szafraniec, Kapil Krishnakumar, Yong Li, Xiaodong Ma, Sarath Chandar, Franziska Meier, Yann LeCun, Michael Rabbat, and Nicolas Ballas.
\newblock V-jepa 2: Self-supervised video models enable understanding, prediction and planning.
\newblock \emph{arXiv preprint arXiv:2506.09985}, 2025.

\bibitem[Balestriero \& LeCun(2025)Balestriero and LeCun]{balestriero2025lejepaprovablescalableselfsupervised}
Randall Balestriero and Yann LeCun.
\newblock Lejepa: Provable and scalable self-supervised learning without the heuristics.
\newblock \emph{arXiv preprint arXiv:2511.08544}, 2025.

\bibitem[Bardes et~al.(2024)Bardes, Garrido, Ponce, Chen, Rabbat, LeCun, Assran, and Ballas]{bardes2024revisiting}
Adrien Bardes, Quentin Garrido, Jean Ponce, Xinlei Chen, Michael Rabbat, Yann LeCun, Mido Assran, and Nicolas Ballas.
\newblock Revisiting feature prediction for learning visual representations from video.
\newblock \emph{Transactions on Machine Learning Research}, pp.\  1--28, 2024.

\bibitem[Daunhawer et~al.(2023)Daunhawer, Bizeul, Palumbo, Marx, and Vogt]{daunhawer2023identifiabilityresultsmultimodalcontrastive}
Imant Daunhawer, Alice Bizeul, Emanuele Palumbo, Alexander Marx, and Julia~E Vogt.
\newblock Identifiability results for multimodal contrastive learning.
\newblock In \emph{ICLR}, 2023.

\bibitem[Dawid \& LeCun(2024)Dawid and LeCun]{Dawid_2024}
Anna Dawid and Yann LeCun.
\newblock Introduction to latent variable energy-based models: a path toward autonomous machine intelligence.
\newblock \emph{Journal of Statistical Mechanics: Theory and Experiment}, pp.\  1--32, 2024.

\bibitem[He et~al.(2022)He, Chen, Xie, Li, Dollár, and Girshick]{he2021maskedautoencodersscalablevision}
Kaiming He, Xinlei Chen, Saining Xie, Yanghao Li, Piotr Dollár, and Ross Girshick.
\newblock Masked autoencoders are scalable vision learners.
\newblock In \emph{CVPR}, 2022.

\bibitem[Hyvarinen \& Morioka(2017)Hyvarinen and Morioka]{hyvarinen2017nonlinear}
Aapo Hyvarinen and Hiroshi Morioka.
\newblock Nonlinear ica of temporally dependent stationary sources.
\newblock In \emph{AISTATS}, 2017.

\bibitem[Hyv{\"a}rinen \& Pajunen(1999)Hyv{\"a}rinen and Pajunen]{hyvarinen1999nonlinear}
Aapo Hyv{\"a}rinen and Petteri Pajunen.
\newblock Nonlinear independent component analysis: Existence and uniqueness results.
\newblock \emph{Neural Networks}, pp.\  429--439, 1999.

\bibitem[Hyv{\"a}rinen et~al.(2024)Hyv{\"a}rinen, Khemakhem, and Monti]{hyvarinen2024identifiability}
Aapo Hyv{\"a}rinen, Ilyes Khemakhem, and Ricardo Monti.
\newblock Identifiability of latent-variable and structural-equation models: from linear to nonlinear: A. hyv{\"a}rinen et al.
\newblock \emph{Annals of the Institute of Statistical Mathematics}, pp.\  1--33, 2024.

\bibitem[Hyvärinen \& Morioka(2016)Hyvärinen and Morioka]{hyvarinen2016unsupervised}
Aapo Hyvärinen and Hiroshi Morioka.
\newblock Unsupervised feature extraction by time-contrastive learning and nonlinear ica.
\newblock In \emph{NeurIPS}, 2016.

\bibitem[Kingma \& Welling(2014)Kingma and Welling]{kingma2022autoencodingvariationalbayes}
Diederik~P Kingma and Max Welling.
\newblock Auto-encoding variational bayes.
\newblock In \emph{ICLR}, 2014.

\bibitem[Klindt et~al.(2026)Klindt, LeCun, and Balestriero]{klindt2026doeslejepalearnworld}
David Klindt, Yann LeCun, and Randall Balestriero.
\newblock When does lejepa learn a world model?
\newblock \emph{arXiv preprint arXiv:2605.26379}, 2026.

\bibitem[K{\"u}gelgen et~al.(2021)K{\"u}gelgen, Sharma, Gresele, Brendel, Sch{\"o}lkopf, Besserve, and Locatello]{kugelgen2021selfsupervised}
Julius~Von K{\"u}gelgen, Yash Sharma, Luigi Gresele, Wieland Brendel, Bernhard Sch{\"o}lkopf, Michel Besserve, and Francesco Locatello.
\newblock Self-supervised learning with data augmentations provably isolates content from style.
\newblock In \emph{NeurIPS}, 2021.

\bibitem[Locatello et~al.(2019)Locatello, Bauer, Lucic, Rätsch, Gelly, Schölkopf, and Bachem]{locatello2019challengingcommonassumptionsunsupervised}
Francesco Locatello, Stefan Bauer, Mario Lucic, Gunnar Rätsch, Sylvain Gelly, Bernhard Schölkopf, and Olivier Bachem.
\newblock Challenging common assumptions in the unsupervised learning of disentangled representations.
\newblock \emph{arXiv preprint arXiv:1811.12359}, 2019.

\bibitem[Lyu et~al.(2022)Lyu, Fu, Wang, and Lu]{lyu2022understanding}
Qi~Lyu, Xiao Fu, Weiran Wang, and Songtao Lu.
\newblock Understanding latent correlation-based multiview learning and self-supervision: An identifiability perspective.
\newblock In \emph{ICLR}, 2022.

\bibitem[Maes et~al.(2026)Maes, Lidec, Scieur, LeCun, and Balestriero]{maes2026leworldmodelstableendtoendjointembedding}
Lucas Maes, Quentin~Le Lidec, Damien Scieur, Yann LeCun, and Randall Balestriero.
\newblock Leworldmodel: Stable end-to-end joint-embedding predictive architecture from pixels.
\newblock \emph{arXiv preprint arXiv:2603.19312}, 2026.

\bibitem[Moran \& Aragam(2026)Moran and Aragam]{moran2026toward}
Gemma Moran and Bryon Aragam.
\newblock Toward interpretable deep generative models via causal representation learning.
\newblock \emph{Journal of the American Statistical Association}, pp.\  259--275, 2026.

\bibitem[Nair et~al.(2022)Nair, Rajeswaran, Kumar, Finn, and Gupta]{nair2022rm}
Suraj Nair, Aravind Rajeswaran, Vikash Kumar, Chelsea Finn, and Abhinav Gupta.
\newblock R3m: A universal visual representation for robot manipulation.
\newblock In \emph{CoRL}, 2022.

\bibitem[Pang et~al.(2025)Pang, Tang, Li, Tang, Cai, Zhang, Niu, Sugiyama, and Yu]{pang2025learning}
Jing-Cheng Pang, Nan Tang, Kaiyuan Li, Yuting Tang, Xin-Qiang Cai, Zhen-Yu Zhang, Gang Niu, Masashi Sugiyama, and Yang Yu.
\newblock Learning view-invariant world models for visual robotic manipulation.
\newblock In \emph{ICLR}, 2025.

\bibitem[Reizinger et~al.(2026)Reizinger, Mucs{\'a}nyi, Guo, Eysenbach, Sch{\"o}lkopf, and Brendel]{reizinger2026skill}
Patrik Reizinger, B{\'a}lint Mucs{\'a}nyi, Siyuan Guo, Benjamin Eysenbach, Bernhard Sch{\"o}lkopf, and Wieland Brendel.
\newblock Skill learning via policy diversity yields identifiable representations for reinforcement learning.
\newblock In \emph{ICLR}, 2026.

\bibitem[Schölkopf et~al.(2021)Schölkopf, Locatello, Bauer, Ke, Kalchbrenner, Goyal, and Bengio]{scholkopf2021causalrepresentationlearning}
Bernhard Schölkopf, Francesco Locatello, Stefan Bauer, Nan~Rosemary Ke, Nal Kalchbrenner, Anirudh Goyal, and Yoshua Bengio.
\newblock Towards causal representation learning.
\newblock \emph{arXiv preprint arXiv:2102.11107}, 2021.

\bibitem[Sermanet et~al.(2018)Sermanet, Lynch, Chebotar, Hsu, Jang, Schaal, Levine, and Brain]{sermanet2017time}
Pierre Sermanet, Corey Lynch, Yevgen Chebotar, Jasmine Hsu, Eric Jang, Stefan Schaal, Sergey Levine, and Google Brain.
\newblock Time-contrastive networks: Self-supervised learning from video.
\newblock In \emph{ICRA}, 2018.

\bibitem[Shrestha \& Fu(2025)Shrestha and Fu]{shrestha2025contentstyle}
Sagar Shrestha and Xiao Fu.
\newblock Content-style learning from unaligned domains: Identifiability under unknown latent dimensions.
\newblock In \emph{ICLR}, 2025.

\bibitem[Tao et~al.(2025)Tao, Xiang, Shukla, Qin, Hinrichsen, Yuan, Bao, Lin, Liu, Chan, Gao, Li, Mu, Xiao, Gurha, N, Choi, Chen, Huang, Calandra, Chen, Luo, and Su]{tao2025maniskill}
Stone Tao, Fanbo Xiang, Arth Shukla, Yuzhe Qin, Xander Hinrichsen, Xiaodi Yuan, Chen Bao, Xinsong Lin, Yulin Liu, Tse-Kai Chan, Yuan Gao, Xuanlin Li, Tongzhou Mu, Nan Xiao, Arnav Gurha, Viswesh N, Yong~Woo Choi, Yen-Ru Chen, Zhiao Huang, Roberto Calandra, Rui Chen, Shan Luo, and Hao Su.
\newblock Maniskill3: {GPU} parallelized robot simulation and rendering for generalizable embodied {AI}.
\newblock In \emph{7th Robot Learning Workshop}, 2025.

\bibitem[Timilsina et~al.(2024)Timilsina, Shrestha, and Fu]{timilsina2024identifiablesharedcomponentanalysis}
Subash Timilsina, Sagar Shrestha, and Xiao Fu.
\newblock Identifiable shared component analysis of unpaired multimodal mixtures.
\newblock In \emph{NeurIPS}, 2024.

\bibitem[Timilsina et~al.(2026)Timilsina, Nguyen, Shrestha, and Fu]{timilsina2026contentstyle}
Subash Timilsina, Hoang-Son Nguyen, Sagar Shrestha, and Xiao Fu.
\newblock Content-style identification via differential independence.
\newblock In \emph{ICML}, 2026.

\bibitem[Wang et~al.(2025)Wang, Gupta, Zhang, Tonekaboni, Jegelka, Jaakkola, and Uhler]{wang2025an}
Chenyu Wang, Sharut Gupta, Xinyi Zhang, Sana Tonekaboni, Stefanie Jegelka, Tommi Jaakkola, and Caroline Uhler.
\newblock An information criterion for controlled disentanglement of multimodal data.
\newblock In \emph{ICLR}, 2025.

\bibitem[Wu et~al.(2025)Wu, Wang, Raj, and Xia]{wu2025unsupervised}
Yuxuan Wu, Ziyu Wang, Bhiksha Raj, and Gus Xia.
\newblock Unsupervised disentanglement of content and style via variance-invariance constraints.
\newblock In \emph{ICLR}, 2025.

\bibitem[Yan et~al.(2023)Yan, Kong, Gui, Chi, Xing, He, and Zhang]{yan2023counterfactual}
Hanqi Yan, Lingjing Kong, Lin Gui, Yuejie Chi, Eric Xing, Yulan He, and Kun Zhang.
\newblock Counterfactual generation with identifiability guarantees.
\newblock In \emph{NeurIPS}, 2023.

\bibitem[Zhang et~al.(2021)Zhang, McAllister, Calandra, Gal, and Levine]{zhang2021learning}
Amy Zhang, Rowan~Thomas McAllister, Roberto Calandra, Yarin Gal, and Sergey Levine.
\newblock Learning invariant representations for reinforcement learning without reconstruction.
\newblock In \emph{ICLR}, 2021.

\bibitem[Zheng et~al.(2026)Zheng, Li, Fan, Wilson, and Zhang]{zheng2026diverse}
Yujia Zheng, Zijian Li, Shunxing Fan, Andrew~Gordon Wilson, and Kun Zhang.
\newblock Diverse dictionary learning.
\newblock In \emph{ICLR}, 2026.

\bibitem[Zimmermann et~al.(2021)Zimmermann, Sharma, Schneider, Bethge, and Brendel]{zimmermann2022contrastivelearninginvertsdata}
Roland~S. Zimmermann, Yash Sharma, Steffen Schneider, Matthias Bethge, and Wieland Brendel.
\newblock Contrastive learning inverts the data generating process.
\newblock In \emph{ICML}, 2021.

\end{thebibliography}
\bibliographystyle{iclr2027_conference}

\appendix
\section{Related Work}
\textbf{Reconstruction-Free Predictive Learning}
departs from reconstruction-based representation learning by constraining latent representations through predictive relationships between observations rather than by reconstructing the observations themselves~\citep{he2021maskedautoencodersscalablevision,kingma2022autoencodingvariationalbayes}. This perspective also appears in nonlinear ICA, where contrastive and predictive objectives can achieve component-wise identifiability by exploiting temporal variation and additional sources of asymmetry, such as non-Gaussianity~\citep{hyvarinen2016unsupervised,hyvarinen2017nonlinear}. JEPAs~\citep{assran2023selfsupervisedlearningimagesjointembedding,assran2025vjepa2selfsupervisedvideo,bardes2024revisiting,Dawid_2024,maes2026leworldmodelstableendtoendjointembedding} follow this principle by predicting target representations from contextual observations, allowing the learned state to emphasize information that is predictive across observations. LeJEPA~\citep{balestriero2025lejepaprovablescalableselfsupervised,klindt2026doeslejepalearnworld} further connects reconstruction-free prediction to identifiability through predictive alignment and an explicit Gaussianity regularizer. Under stationary Gaussian predictive dynamics and appropriate normalization, its population representation recovers the complete latent state up to a global orthogonal transformation. \modelname builds on this global recovery result but considers a finer identification target: the remaining orthogonal transformation can preserve the complete latent state while still mixing its invariant and variant subspaces.

\textbf{Content-Style Disentanglement}
seeks to decompose observations into a shared component that remains stable across related observations and a variation-specific component. With paired or aligned observations, discriminative learning can identify shared content, and in some settings can further separate shared and private factors, without reconstructing observations~\citep{kugelgen2021selfsupervised,lyu2022understanding,daunhawer2023identifiabilityresultsmultimodalcontrastive}. Recent work also characterizes controlled separation through variance--invariance constraints and information criteria~\citep{wu2025unsupervised,wang2025an}. Other approaches study identification from unpaired domains using distributional variation together with additional statistical or structural conditions~\citep{timilsina2024identifiablesharedcomponentanalysis,shrestha2025contentstyle,yan2023counterfactual}. In particular, differential independence constrains the geometry of content- and style-induced changes, enabling identification without assuming statistical independence between the two factors~\citep{timilsina2026contentstyle}. We adopt the paired-correspondence structure established in this literature and use it to resolve the residual mixing left by predictive identification. Our work thereby recovers both invariant and variant subspaces within the complete predictive representation, up to independent block-wise isometries and without an observation decoder.

\section{Proof}
\label{app:proofs}
\subsection{Proof of Theorem~\ref{th1}}
\label{app:proof_th1}
\begin{theorem*}[Nonlinear Latent Ambiguity]
Let $h=f(z_H,z_L)$. For any smooth nonlinear function $\psi:\mathbb{R}^{d_L}\rightarrow\mathbb{R}^{d_H}$, define
$\tilde z_H=z_H+\psi(z_L),\tilde z_L=z_L$,
and $\tilde f(\tilde z_H,\tilde z_L) = f\bigl(\tilde z_H-\psi(\tilde z_L),\tilde z_L\bigr)$.
Then $\widetilde f(\widetilde z)=f(z)$, while $\frac{\partial\widetilde z_H}{\partial z_L}=J_\psi(z_L)$.
Whenever this derivative is nonzero, the alternative first block depends on the variant factor. 
\end{theorem*}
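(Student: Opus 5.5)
The plan is a direct verification built around the map $V_{\mathrm{leak}}:(z_H,z_L)\mapsto(z_H+\psi(z_L),z_L)$. First I check that $V_{\mathrm{leak}}$ is a smooth bijection of $\mathbb{R}^{d}$ with smooth inverse $V_{\mathrm{leak}}^{-1}(\tilde z_H,\tilde z_L)=(\tilde z_H-\psi(\tilde z_L),\tilde z_L)$. Composing the two maps in either order gives the identity, since the $\psi(z_L)$ term is added and then removed while $z_L=\tilde z_L$ is untouched. This shows $V_{\mathrm{leak}}$ is a diffeomorphism. By construction, $\tilde f=f\circ V_{\mathrm{leak}}^{-1}$, so $\tilde f$ is again a smooth embedding on the image of the latent support; the new parameterization is admissible in the same model class.

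Next I verify observational equivalence. Substituting $\tilde z=V_{\mathrm{leak}}(z)$ gives
\[
\tilde f(\tilde z)=f\bigl(z_H+\psi(z_L)-\psi(z_L),\,z_L\bigr)=f(z_H,z_L)=f(z).
\]
This holds pointwise, so the induced distribution of $h$ is the same under $(f,z)$ and under $(\tilde f,\tilde z)$ for every latent distribution.

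For the derivative claim, I differentiate $\tilde z_H=z_H+\psi(z_L)$ with respect to $z_L$. The $z_H$ term is constant in $z_L$, so $\partial\tilde z_H/\partial z_L=J_\psi(z_L)$. Assembling the remaining partial derivatives ($\partial\tilde z_H/\partial z_H=I_{d_H}$, $\partial\tilde z_L/\partial z_H=0$, $\partial\tilde z_L/\partial z_L=I_{d_L}$) recovers the block upper-triangular Jacobian stated in the Structure paragraph.

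Finally, for the interpretation, suppose $J_\psi(z_L^0)\neq0$. Then some direction $v$ has $J_\psi(z_L^0)v\neq0$, so $\tilde z_H$ changes along $z_L^0+tv$ for small $t$ while $z_H$ is held fixed. Thus two states sharing the same invariant factor receive different first blocks, which is exactly the situation in the worked example. No step is a genuine obstacle. The only point needing care is the admissibility check that $\tilde f$ remains a smooth embedding, which reduces to the composition of an embedding with a diffeomorphism.
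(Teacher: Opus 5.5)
Your proposal is correct and follows essentially the same route as the paper's proof. Both construct $V_{\mathrm{leak}}$ with its explicit inverse, set $\tilde f=f\circ V_{\mathrm{leak}}^{-1}$ so that $\tilde f(\tilde z)=f(z)$ pointwise (hence the latent law $(V_{\mathrm{leak}})_\# P_z$ induces the same distribution of $h$), and read off the block upper-triangular Jacobian. Your check that $\tilde f$ remains a smooth embedding, and your local argument that $\tilde z_H$ varies with $z_L$ when $J_\psi\neq 0$, are small additions the paper omits; the paper instead notes $\det J_{V_{\mathrm{leak}}}=1$.
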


\begin{proof}
Let the continuous random vector be $z=[z_H^\top,z_L^\top]^\top\in\mathbb R^d$, where $d=d_H+d_L$ and
$h=f(z)$.  We first construct a transformation that induces explicit variant-factor leakage. Choose an arbitrary smooth nonconstant function $\psi:\mathbb R^{d_L}\to\mathbb R^{d_H}$, and define $\widetilde z=V_{\mathrm{leak}}(z)$ by
\begin{equation}
    \widetilde z_H=z_H+\psi(z_L),
    \qquad
    \widetilde z_L=z_L.
\end{equation}
The Jacobian matrix of this transformation is
\begin{equation}
    J_{V_{\mathrm{leak}}}(z)
    =\begin{bmatrix}
       I_{d_H}&J_\psi(z_L)\\
       0&I_{d_L}
     \end{bmatrix}.
\end{equation}
Since $\det J_{V_{\mathrm{leak}}}(z)=1$, the mapping preserves volume. Furthermore,$V_{\mathrm{leak}}^{-1}(\widetilde z_H,\widetilde z_L)=\bigl(\widetilde z_H-\psi(\widetilde z_L),\widetilde z_L\bigr)$, so $V_{\mathrm{leak}}$ is a global smooth diffeomorphism.

Returning to the explicit leakage construction, define the alternative observation-generating map
\begin{equation}
    \widetilde f=f\circ V_{\mathrm{leak}}^{-1}.
\end{equation}
Then
\begin{equation}
    h=f(z)
     =f\bigl(V_{\mathrm{leak}}^{-1}(\widetilde z)\bigr)
     =\widetilde f(\widetilde z).
\end{equation}
Therefore, if the alternative latent is the pushforward
$P_{\widetilde z}=(V_{\mathrm{leak}})_\#P_z$, the pairs
$(f,P_z)$ and $(\widetilde f,P_{\widetilde z})$ induce the same observed marginal distribution $p(h)$.

\paragraph{Conclusion.}
The true latent model $(f,P_z)$ and the alternative latent model
$(\widetilde f,P_{\widetilde z})$ generate the same observed marginal distribution $p(h)$. Consequently, observations of $h$ alone cannot distinguish the true invariant latent factor $z_H$ from the alternative factor $\widetilde z_H=z_H+\psi(z_L)$, which contains nonlinear information from the variant latent factor $z_L$. Accordingly, an encoder $\phi:\mathcal H\to\mathbb R^d$ cannot identify the learned invariant
representation $r_H=[\phi(h)]_H$ with the true $z_H$ from the observation law alone.
\end{proof}

\subsection{Proof of Theorem~\ref{th2}}
\label{app:proof_th2}
\begin{theorem*}[Reconstruction-free Block Identifiability]
Under the predictive model in Equation~\ref{eq:predictive_dynamics} and
Assumption~\ref{assump:sufficient_variation}, let $g^\star$ be a global population minimizer of
$\mathcal L_{\mathrm{pred}}$ over the centered and whitened representation class specified in  Section~\ref{sec:estimation}.
If $g_H^\star(z^{(1)})=g_H^\star(z^{(2)})$ almost surely under $\mathcal D_{\mathrm{inv}}$, then
\begin{equation}
g^\star(z)=Qz
=
\begin{bmatrix}
A_{11} & 0\\
0 & A_{22}
\end{bmatrix}
\begin{bmatrix}
z_H\\
z_L
\end{bmatrix},
\end{equation}
where $A_{11}^\top A_{11}=I_{d_H}$ and
$A_{22}^\top A_{22}=I_{d_L}$. Hence, the invariant and variant latent subspaces are
identified up to independent within-block orthogonal transformations without observation reconstruction.
\end{theorem*}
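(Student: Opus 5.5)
The proof has two stages: first reduce $g^\star$ to an orthogonal linear map using the predictive identification result, then use invariant consistency together with Assumption~\ref{assump:sufficient_variation} to kill the cross-block terms.

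\emph{Stage 1: orthogonal linear form.} I would invoke the global predictive identification result cited in Section~\ref{sec:predictive identification}. Under the stationary Gaussian dynamics of Equation~\ref{eq:predictive_dynamics}, any global minimizer of $\mathcal L_{\mathrm{pred}}$ in the centered and whitened class is linear, $g^\star(z)=Qz$. The normalization constraint in Equation~\ref{eq:normalization_constraint}, with $z\sim\mathcal N(0,I_d)$, gives $\operatorname{Cov}(g^\star(z))=QQ^\top=I_d$. Since $Q$ is square, $Q^\top Q=I_d$ as well. Proposition~\ref{prop3} then lets me write $Q$ in the $2\times2$ block form of Equation~\ref{eq:linear_partition}. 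This step rests entirely on the cited LeJEPA analysis, which I take as given; the paper treats it as established.

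\emph{Stage 2: eliminating $A_{12}$.} Take any pair from $\mathcal D_{\mathrm{inv}}$. The two states share $z_H$, so
\[
g_H^\star(z^{(1)})-g_H^\star(z^{(2)})=A_{12}\Delta z_L .
\]
By hypothesis this is zero almost surely. Hence
\[
0=\mathbb E\bigl[\|A_{12}\Delta z_L\|_2^2\bigr]=\operatorname{tr}\bigl(A_{12}\Sigma_{\Delta L}A_{12}^\top\bigr).
\]
Because $\Sigma_{\Delta L}\succ0$, we have $\operatorname{tr}(A_{12}\Sigma_{\Delta L}A_{12}^\top)\ge\lambda_{\min}(\Sigma_{\Delta L})\|A_{12}\|_F^2$. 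This forces $A_{12}=0$. It is the same computation as in the stability remark, specialized to $\varepsilon=0$. One caveat: the step implicitly requires that the linear form $g^\star=Qz$ holds on the support of $\mathcal D_{\mathrm{inv}}$, not only under the predictive marginal. I would state this as part of the representation class, namely that $g^\star$ is a single function of $z$ applied to both distributions. Given linearity on a full-support Gaussian, it extends by continuity on the connected support.

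\emph{Stage 3: eliminating $A_{21}$ and orthogonality of the diagonal blocks.} With $A_{12}=0$, I read off blocks of the two orthogonality relations. From $Q^\top Q=I_d$, the $(2,2)$ block gives $A_{22}^\top A_{22}=I_{d_L}$. From $QQ^\top=I_d$, the $(1,1)$ block gives $A_{11}A_{11}^\top=I_{d_H}$. Since $A_{11}$ is square, this also gives $A_{11}^\top A_{11}=I_{d_H}$. Now the $(1,1)$ block of $Q^\top Q$ reads $A_{11}^\top A_{11}+A_{21}^\top A_{21}=I_{d_H}$, so $A_{21}^\top A_{21}=0$ and therefore $A_{21}=0$. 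This yields the block-diagonal form claimed in Theorem~\ref{th2}: $g_H^\star(z)=A_{11}z_H$ and $g_L^\star(z)=A_{22}z_L$, each an isometry.

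The main obstacle is Stage 1, since it imports the linear-recovery theorem and its exact hypotheses. Stages 2 and 3 are routine linear algebra.
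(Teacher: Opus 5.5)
Your proposal is correct and follows the same three-stage structure as the paper's proof. Stage~2 is the paper's argument; you use a $\lambda_{\min}(\Sigma_{\Delta L})$ bound where the paper writes $\|A_{12}\Sigma_{\Delta L}^{1/2}\|_F^2$ with $\Sigma_{\Delta L}^{1/2}$ invertible. Your Stage~3 gets $A_{21}=0$ from the $(1,1)$ block of $Q^\top Q$, i.e.\ $A_{21}^\top A_{21}=0$. The paper instead uses the off-diagonal block $A_{11}A_{21}^\top=0$ of $QQ^\top$ and inverts $A_{11}$. The two are interchangeable.

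The one real difference is Stage~1. You import linearity from the cited LeJEPA analysis, while the paper proves it in a few lines.
\begin{itemize}
\item Centering, whitening and stationarity give $\mathcal L_{\mathrm{pred}}(g)=d-\mathbb E\langle g(z^{(1)}),g(z^{(2)})\rangle$.
\item Expand each coordinate as $g_i=\sum_{|\alpha|\ge1}c^{(i)}_\alpha H_\alpha$ in the normalized multivariate Hermite basis. Unit variance gives $\sum_{k\ge1}w^{(i)}_k=1$, where $w^{(i)}_k=\sum_{|\alpha|=k}(c^{(i)}_\alpha)^2$.
\item The Hermite generating function gives $\mathbb E[\mathrm{He}_k(z^{(2)}_j)\mid z^{(1)}_j,X]=\rho_X^k\,\mathrm{He}_k(z^{(1)}_j)$. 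Hence $\mathbb E[H_\alpha(z^{(2)})H_\beta(z^{(1)})]=\lambda_{|\alpha|}\delta_{\alpha\beta}$ with $\lambda_k=\mathbb E_X[\rho_X^k]$, and so $\mathbb E[g_i(z^{(1)})g_i(z^{(2)})]=\sum_k w^{(i)}_k\lambda_k\le\lambda_1$.
\item Since $0<\rho_X<1$, we have $\lambda_k<\lambda_1$ for $k\ge2$, and every orthogonal linear map attains the total bound $d\lambda_1$. A global minimizer must therefore carry all its variance in degree one, so $g^\star(z)=Qz$ with $QQ^\top=I_d$.
\end{itemize}
Including this makes your proof self-contained and removes exactly the worry you flag. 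It handles the context-dependent correlation $\rho_X$ directly through the averages $\lambda_k$, so you need not check that an external theorem's hypotheses match this setting. It also shows the minimizer class is precisely the orthogonal maps.

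Your continuity remark agrees with the paper's, but connectedness is not needed. A continuous function equal to $Qz$ almost everywhere under a full-support Gaussian equals it on all of $\mathbb R^d$.
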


\begin{proof}
The proof follows the identifiability decomposition induced by the predictive objective and the shared trajectory structure. We first show that the predictive dynamics restrict the optimal representation to a linear metric-preserving embedding through Hermite spectral analysis. Subsequently, the shared invariance removes the cross-block dependency, while orthogonality removes the remaining residual leakage.

\textbf{Step 1. Predictive Dynamics Strictly Isolates the Linear
Eigenspace.}

We analyze the reconstruction-free predictive objective defined on the positive pairs introduced in Section~\ref{sec:background and formulation}. By  the normalization constraint in Equation~\ref{eq:normalization_constraint}, $\operatorname{Cov}(g(z))=I_d$ and $\mathbb E[g(z)]=0$. Therefore, $\mathbb E\|g(z)\|_2^2=d$. The Hermite expansion energy is globally normalized: $\sum_{i=1}^{d} \sum_{|\alpha|\ge1} (c_\alpha^{(i)})^2 =d$. Under this constraint, minimizing the predictive distance is equivalent to
maximizing $\mathbb E[\langle g(z^{(1)}),g(z^{(2)})\rangle]$.

Following the reconstruction-free predictive learning framework
introduced in JEPA-style representation learning~\citep{assran2023selfsupervisedlearningimagesjointembedding,bardes2024revisiting},
and its theoretical analysis in LeJEPA~\citep{klindt2026doeslejepalearnworld} and the predictive model in Equation~\ref{eq:predictive_dynamics}, we models the context-conditioned transition as
\begin{equation}
    z^{(2)}=\rho_X z^{(1)}+\sqrt{1-\rho_X^2}\,\epsilon,
    \qquad \epsilon\sim\mathcal N(0,I_d),
\end{equation} where conditionally on $X$, $\epsilon$ is independent of $z^{(1)}$ and $0<\rho_X<1$. Stationarity means
$z^{(1)}\mid X\sim\mathcal N(0,I_d)$ and hence also $z^{(2)}\mid X\sim\mathcal N(0,I_d)$.

Because the representation is centered by the normalization constraint in Equation~\ref{eq:normalization_constraint}, the degree-0 constant term vanishes ($c_0^{(i)} = 0$). The same constraint gives $\operatorname{Var}(g_i(z))=\sum_{|\alpha|\ge1}\bigl(c_\alpha^{(i)}\bigr)^2=1$.
Grouping by total degree $k = |\alpha|$, let $w_k^{(i)}=\sum_{|\alpha|=k}\bigl(c_\alpha^{(i)}\bigr)^2$, and $\sum_{k\ge1}w_k^{(i)}=1$.

To compute the cross-trajectory correlation exactly, the calculation proceeds in three steps:

\textit{(a) \textbf{The transition attenuates each Hermite degree by $\rho_X^k$ conditionally on $X$.}}
Consider the univariate case for the $j$-th coordinate, $z_j^{(2)}=\rho_Xz_j^{(1)}+\sqrt{1-\rho_X^2}\epsilon_j$. We use the exponential generating function for Hermite polynomials:
$$\sum_{k=0}^\infty \frac{s^k}{k!} He_k(x) = \exp\left(sx - \frac{s^2}{2}\right)$$
evaluating at $z_j^{(2)}$, and taking conditional expectation over
$\epsilon_j$ gives
\begin{equation}
\mathbb E_{\epsilon_j}\left[\exp\!\left(sz_j^{(2)}-\frac{s^2}{2}\right)\middle|z_j^{(1)},X\right]=\exp\!\left(s\rho_Xz_j^{(1)}-\frac{(s\rho_X)^2}{2}\right).
\end{equation}
Matching coefficients gives
\begin{equation}
\mathbb E_{\epsilon_j}\left[He_k\!\left(z_j^{(2)}\right)\mid z_j^{(1)},X\right]=\rho_X^k He_k\!\left(z_j^{(1)}\right).
\end{equation}

\textit{(b) \textbf{Different Hermite components do not interact across views.}}
In the multivariate case, conditional independence across coordinates and iterated expectation give
\begin{equation}
\mathbb E[H_\alpha(z^{(2)})H_\beta(z^{(1)})\mid X]=\rho_X^{|\alpha|}\mathbb E[H_\alpha(z^{(1)})H_\beta(z^{(1)})\mid X]=\rho_X^{|\alpha|}\delta_{\alpha\beta}.
\end{equation}
We use the normalized multivariate Hermite basis
$H_\alpha(z)=\prod_{j=1}^{d} \mathrm{He}_{\alpha_j}(z_j)/\sqrt{\alpha_j!}$.
After averaging over $X$, define
\begin{equation}
    \lambda_k=\mathbb E_X[\rho_X^k],
\end{equation}
so that
\begin{equation}
    \mathbb E[H_\alpha(z^{(2)})H_\beta(z^{(1)})]
    =\lambda_{|\alpha|}\delta_{\alpha\beta}.
\end{equation}

\textit{(c) \textbf{Nonlinearity strictly reduces correlation.}}
The total cross-trajectory correlation of coordinate $i$ is
\begin{equation}
\mathbb E[g_i(z^{(2)})g_i(z^{(1)})]=\sum_{k\ge1}w_k^{(i)}\lambda_k.
\end{equation}
Since $0<\rho_X<1$, we have $\lambda_k<\lambda_1$ for every $k\ge2$. Therefore,
\begin{equation}
\sum_{k\ge1}w_k^{(i)}\lambda_k\le \lambda_1\sum_{k\ge1}w_k^{(i)}=\lambda_1,
\end{equation}
and equality requires $w_k^{(i)}=0$ for all $k\ge2$.  Thus all variance at a global optimum lies in degree one.  Since the degree-1 Hermite basis consists of the linear coordinate functions, $r=g(z)=Qz$.

The normalization constraint in Equation~\ref{eq:normalization_constraint} now gives $I_d=\operatorname{Cov}(g(z))=QQ^\top$. Because $g$ has output dimension $d$, the matrix $Q\in\mathbb R^{d\times d}$ is square;
hence $Q$ is orthogonal and also satisfies $Q^\top Q=I_d$.  Partition it as
\begin{equation}
    Q=\begin{bmatrix}A_{11}&A_{12}\\A_{21}&A_{22}\end{bmatrix},
\end{equation}
where $A_{11}\in\mathbb R^{d_H\times d_H}$, $A_{12}\in\mathbb R^{d_H\times d_L}$, $A_{21}\in\mathbb R^{d_L\times d_H}$, and $A_{22}\in\mathbb R^{d_L\times d_L}$.
The Hermite argument establishes $g(z)=Qz$ almost everywhere under the Gaussian marginal. Since $g$ is continuous and the Gaussian distribution has full support on $\mathbb R^d$, the equality extends to every point in the latent support, including the endpoints occurring under $\mathcal D_{\mathrm{inv}}$.

\vspace{1ex}

\textbf{Step 2. Content Invariance Eliminates the Upper-Right Block($A_{12}=0$).}

Since Step 1 establishes $r=g(z)=Qz$, the learned invariant block is $r_H=g_H(z)=A_{11}z_H+A_{12}z_L$, $\frac{\partial r_H}{\partial z_L}=A_{12}$, and the elimination of cross-block dependence follows the identifiability arguments developed in recent content-style separation theory~\citep{shrestha2025contentstyle}.

We want to establish that the extracted invariant block does not depend on the variant factor, which is equivalent to proving that the partial derivative strictly vanishes:
$$\frac{\partial r_H}{\partial z_L} = A_{12} = \mathbf{0}$$

Because $g^\star$ satisfies
$\mathcal L_{\mathrm{inv}}(g^\star)=0$, we have $g_H^\star(z^{(1)})=g_H^\star(z^{(2)})$
almost surely for $(z^{(1)},z^{(2)})\sim\mathcal D_{\mathrm{inv}}$. Since these pairs satisfy $z_H^{(1)}=z_H^{(2)}$, it follows that $A_{12}\left(z_L^{(1)}-z_L^{(2)}\right)= A_{12}\Delta z_L = 0$ almost surely.

By Assumption~\ref{assump:sufficient_variation},
$\Sigma_{\Delta L}=\mathbb E[\Delta z_L\Delta z_L^\top]\succ0$.  Hence
\begin{equation}
0=\mathbb E\|A_{12}\Delta z_L\|_2^2=\operatorname{Tr}(A_{12}\Sigma_{\Delta L}A_{12}^\top)=\|A_{12}\Sigma_{\Delta L}^{1/2}\|_F^2
\end{equation}
Since $\Sigma_{\Delta L}^{1/2}$ is invertible, $A_{12}=0$.

\vspace{1ex}

\textbf{Step 3.  Metric Preservation Removes the Remaining Cross Block ($A_{21}=0$).}

With $A_{12}=0$, the upper-left block of $QQ^\top=I_d$ is
\begin{equation}
    A_{11}A_{11}^\top=I_{d_H}.
\end{equation}
Thus the square matrix $A_{11}$ is invertible. The upper-right block of $QQ^\top=I_d$ is
\begin{equation}
A_{11}A_{21}^\top+A_{12}A_{22}^\top=A_{11}A_{21}^\top=0.
\end{equation}
Multiplying by $A_{11}^{-1}$ gives $A_{21}=0$. Finally, the lower-right block gives $A_{22}A_{22}^\top=I_{d_L}$.
Because $A_{11}$ and $A_{22}$ are square, we equivalently have
\begin{equation}
A_{11}^\top A_{11}=I_{d_H},\qquad A_{22}^\top A_{22}=I_{d_L}.
\end{equation}
\vspace{1ex}

\paragraph{Conclusion.}
Substituting $A_{12}=0$ and $A_{21}=0$ into $Q$ gives
\begin{equation}
J_g=Q=\begin{bmatrix}A_{11}&0\\0&A_{22}\end{bmatrix}.
\end{equation}
Therefore, the learned representation $r=g(z)$ decomposes as
\begin{equation}
r_H=A_{11}z_H,\qquad r_L=A_{22}z_L,
\end{equation}
where $A_{11}^\top A_{11}=I_{d_H}$ and $A_{22}^\top A_{22}=I_{d_L}$. Hence, $r_H$ recovers the invariant latent
subspace associated with $z_H$, while $r_L$ recovers the variant latent subspace associated with $z_L$, in each case up to an independent orthogonal transformation.
\end{proof}

\subsection{Proof of Theorem~\ref{th3}}
\label{app:proof_th3}
\begin{theorem*}[Gradient Isolation]
Under the block identifiability established in Theorem~\ref{th2}, suppose a differentiable downstream predictor depends solely on the invariant representation $r_H$.
Then the prediction is locally insensitive to perturbations in $z_L$: $\frac{\partial Y}{\partial z_L}=0$. Consequently, local variant perturbations cannot propagate
through the invariant representation.
\end{theorem*}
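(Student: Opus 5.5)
The plan is to prove Theorem~\ref{th3} with the chain rule, using the block-diagonal form from Theorem~\ref{th2}. First I fix the setup. By Theorem~\ref{th2}, the learned representation satisfies $r=g^\star(z)=Qz$ with $Q=\operatorname{diag}(A_{11},A_{22})$. Hence $r_H=g_H^\star(z)=A_{11}z_H$ on the whole latent support. Step~1 of the proof of Theorem~\ref{th2} already extends the linear form from almost everywhere to every point, using continuity and the full support of the Gaussian marginal.

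Next I formalize the hypothesis that the predictor depends solely on $r_H$. I write $Y=\Psi(c,r_H)$, where $\Psi$ is differentiable and $c$ denotes any additional context, such as the Context in Example~\ref{ex:isolated_channel}. The context is held fixed under perturbations of $z_L$, so $\partial c/\partial z_L=0$. The composite is then $Y(z)=\Psi\bigl(c,A_{11}z_H\bigr)$, and the chain rule gives
\begin{equation}
\frac{\partial Y}{\partial z_L}
=\frac{\partial \Psi}{\partial r_H}\bigl(c,r_H\bigr)\,\frac{\partial r_H}{\partial z_L}
=\frac{\partial \Psi}{\partial r_H}\bigl(c,r_H\bigr)\,A_{12}.
\end{equation}
Theorem~\ref{th2} gives $A_{12}=0$, so this derivative vanishes identically.

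I would also add the finite-perturbation statement mentioned in the interpretation. For any two states $z,z'$ in the support with $z_H=z'_H$, we have $r_H(z)=A_{11}z_H=r_H(z')$, and therefore $Y(z)=Y(z')$. The local statement also follows from this, since $Y$ is constant on each slice $\{z_H=c\}$. The last sentence of the theorem is then immediate: no perturbation confined to $z_L$ can change $r_H$, so it cannot propagate through it.

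The main obstacle is interpretive rather than technical. The phrase ``depends solely on $r_H$'' must exclude any other dependence on $z_L$, whether through the context or through $r_L$; otherwise the claim fails. I would make this explicit by requiring that every input of $\Psi$ other than $r_H$ be held fixed. A second point is that differentiability is needed only for $\Psi$, since $g_H^\star$ is linear by Theorem~\ref{th2}.
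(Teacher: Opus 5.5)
Your proposal is correct and follows essentially the same route as the paper: use the block-diagonal form from Theorem~\ref{th2} to get $\partial r_H/\partial z_L = A_{12} = 0$, then apply the chain rule to $Y=\Psi(\mathrm{Context}, r_H)$ with the context held fixed. Your finite-perturbation argument, that $Y$ is constant on each slice $\{z_H=\mathrm{const}\}$, is a harmless strengthening; the paper mentions it only in its interpretation paragraph, not in the proof.
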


\begin{proof}
By Theorem~\ref{th2}, the learned representation admits the block-diagonal decomposition
$$r=g(z)=\begin{bmatrix}A_{11}&0\\0&A_{22}\end{bmatrix}\begin{bmatrix}z_H\\z_L\end{bmatrix}$$
which implies$\frac{\partial r_H}{\partial z_L}=0$.
Consider a downstream predictor
$$Y=\Psi(\mathrm{Context},r_H)$$
whose prediction depends only on the invariant representation.
Applying the multivariate chain rule,
$$\frac{\partial Y}{\partial z_L}=\frac{\partial Y}{\partial r_H}\frac{\partial r_H}{\partial z_L}=0$$

\paragraph{Conclusion.}
Because $r_H=A_{11}z_H$ is independent of $z_L$, its Jacobian satisfies $\frac{\partial r_H}{\partial z_L}=0$. Therefore, for any differentiable downstream predictor $Y=\Psi(\mathrm{Context},r_H)$ whose context is held fixed, we have $\frac{\partial Y}{\partial z_L}=0$. Therefore, perturbations confined to the variant latent factor $z_L$ cannot propagate to $Y$ through the learned invariant representation $r_H$.
\end{proof}

\section{Synthetic Evaluation Additional Details}
\label{app:simulation}
\subsection{Synthetic Setting Details}
\label{app:simulation setting}
We consider an $8$-dimensional latent state $z=[z_H^\top,z_L^\top]^\top$, where $z_H,z_L\in\mathbb R^4$ are independently distributed as $\mathcal N(0,I_4)$. Observations are generated as $h=f(z)$ using a four-layer smooth nonlinear coupling map with coupling strength $0.5$. For predictive training, we generate stationary Gaussian OU pairs with correlation $\rho=0.5$. The encoder is trained with the reconstruction-free alignment--SIGReg objective for $10{,}000$ updates using batch size $256$, learning rate $3\times10^{-3}$, and SIGReg weight $10^{-6}$. The encoder is then frozen for all subsequent experiments. No ground-truth latent variables, decoder, or reconstruction objective is used during training. For each trained encoder, we first fit a centered ZCA whitening
transformation using $30{,}000$ unlabeled representations. Let $\bar r$ denote the resulting whitened representation. We then construct $N=30{,}000$ matched invariant pairs and define $\Delta\bar r_i = \bar r_i^{(1)}-\bar r_i^{(2)}$. To match the sufficient-variation condition in
Assumption~\ref{assump:sufficient_variation}, we estimate the uncentered second-moment matrix $\widehat M_{\Delta r} = \frac{1}{N} \sum_{i=1}^{N} \Delta\bar r_i\Delta\bar r_i^\top$. Under the exact block model, invariant correspondences satisfy $\Delta z_H=0$, so the null space of the population difference moment is the invariant subspace. We therefore recover the invariant block from the bottom $d_H$ eigenspace of $\widehat M_{\Delta r}$ and use its orthogonal complement as the $d_L$-dimensional variant subspace. The resulting spectral transformation is applied to the frozen whitened representation. 

\emph{\modelname} uses the correctly matched invariant pairs, \emph{LeJEPA} retains the centered ZCA representation without the spectral rotation, and \emph{Shuffled Pairs} randomly permutes the second member of each pair, destroying the shared-invariant correspondence. We omit No Diversity as a separate numerical baseline because it is operationally identical to LeJEPA in the present implementation. Specifically, setting the paired variant states equal gives $\Delta z_L=0$, so the difference covariance contains no informative direction for spectral recovery. The resulting recovery transformation defaults to the
identity head and produces exactly the LeJEPA representation. This is an
implementation-specific realization of the boundary case in which the full-rank variation assumption fails; the invariant subspace is not uniquely determined by the pair differences.

Evaluation uses independent sets of $15{,}000$ probe samples, $15{,}000$ held-out samples, and $15{,}000$ held-out invariant pairs.  We report Global $R^2$ for linear recovery of the complete latent state, block-level $R^2$ for the invariant and variant subspaces, and the cross-block leakage terms $\lVert A_{12}\rVert_F$ and $\lVert A_{21}\rVert_F$. For comparisons across latent dimensions, we normalize the leakage terms as
$\lVert A_{12}\rVert_F/\sqrt{d_H}$ and $\lVert A_{21}\rVert_F/\sqrt{d_L}$. Block purity summarizes separation between the two recovered blocks, defined as $\operatorname{Purity}(A)=1-\frac{\lVert A_{12}\rVert_F^2+\lVert A_{21}\rVert_F^2}{\lVert A\rVert_F^2}$, where a value of one indicates perfect block separation. Main comparisons are aggregated over five seeds ($0$--$4$); the single-seed scaling experiments are intended only as finite-sample diagnostics.

\subsection{Loss Ablations}
\label{app:simulation-ablation}
We isolate the three mechanisms associated with the identifiability pipeline. The \emph{Full} model uses predictive alignment and SIGReg during encoder training, centered ZCA normalization, and the matched-pair spectral head. For \emph{w/o Predictive}, the encoder is trained using SIGReg alone, with the OU-pair alignment term removed. For \emph{w/o Invariance}, we retain the full encoder and ZCA normalization but remove the matched-pair spectral rotation. For \emph{w/o Variance}, we remove both SIGReg during encoder training and ZCA normalization during head fitting, retaining only centering, predictive alignment, and matched-pair recovery, testing the variance-preserving feasible geometry implemented by SIGReg and ZCA. Because this ablation removes two coupled components, it evaluates
their combined contribution to variance-preserving geometry.

As shown in Fig.~\ref{fig:simulation-loss-ablation}, removing predictive alignment decreases Global $R^2$. The invariant and variant scores remain relatively high, but they are no longer meaningful evidence of Theorem~\ref{th2} because the preceding global-identification condition has failed. Without matched-pair recovery the complementary failure mode is produced: Global $R^2$ remains
$0.9977\pm0.0016$, whereas Invariant and Variant $R^2$ fall, and the leakage terms increase to $0.3320\pm0.2152$ and $0.3300\pm0.2149$. Removing the variance preservation leaves the mean block scores near those of the full model but increases the covariance error and increases the variability of $A_{21}$. Thus, in this synthetic setting, predictive and invariance information determine the two recovery stages, while variance preservation mainly improves
normalization and finite-sample stability.

\subsection{Scaling Diagnostics}
\label{app:simulation-scaling}
We conduct three scaling studies by varying the latent dimension, the predictive training budget, and the sample budget used for whitening and spectral-head fitting. In each of the scaling figures, panels (a)--(c) correspond to latent-dimension scaling, encoder training-sample scaling, and head-fitting sample scaling, respectively. Every panel compares Ours, LeJEPA, and Shuffled Pairs (except for Global $R^2$, which remains consistent because of the experiment setting). All curves use seed $0$ to diagnose trends. Together, these diagnostics characterize how predictive training and finite-sample moment estimation approach the population recovery conditions as their respective data budgets increase.

\paragraph{Latent-dimension scaling.}
We vary the total latent dimension over $d\in\{4,8,16,32,64\}, d_H=d_L=d/2$, and increase the representation dimension accordingly while keeping the remaining settings fixed. Panel (a) of Figs.~\ref{fig:scaling_global_r2}--\ref{fig:scaling_normalized_a21}
shows that Global $R^2$ remains high and exceeds $0.9999$ for $d\geq16$. For our method, both block-level $R^2$ values remain above $0.994$ and normalized $\lVert A_{12}\rVert_F$ remains below $1.5\times10^{-3}$. Normalized $\lVert A_{21}\rVert_F$, however, increases from $0.0053$ at $d=4$ to $0.0325$ at $d=64$. This asymmetry indicates that increasing dimension has little effect on global recovery but makes estimation of the complementary block more sensitive.

\paragraph{Encoder training-sample scaling.}
We vary the number of online OU training pairs over $\{2.56\times10^5,\,6.40\times10^5,\,1.28\times10^6,\,2.56\times10^6,\,5.12\times10^6\}$, while keeping the latent dimension, observation map, optimization configuration, and downstream recovery procedure fixed. As shown in
panel (b) of Figs.~\ref{fig:scaling_global_r2}--\ref{fig:scaling_normalized_a21}, Global $R^2$ increases with the training budget, with a clear transition around $1.28\times10^6$ pairs and limited improvement beyond $2.56\times10^6$. The block-level leakage decreases in parallel, indicating that insufficient predictive training primarily manifests as imperfect recovery of the global representation.

\paragraph{Head-fitting sample scaling.}
With the predictive encoder fixed, we vary the common sample budget for ZCA whitening and matched-pair recovery over $\{250,500,1000,3000,10000,30000\}$. Panel (c) of
Figs.~\ref{fig:scaling_global_r2}--\ref{fig:scaling_normalized_a21} shows a strong reduction in normalized $\lVert A_{21}\rVert_F$, from $0.156$ with $250$ samples to $0.012$ with $30000$ samples, while normalized $\lVert A_{12}\rVert_F$ remains below $0.005$ throughout. The Global and block-level $R^2$ curves remain comparatively stable. This performance indicates that the matched-pair covariance already identifies the invariant nullspace reliably, while more samples are primarily needed to estimate whitening and the complementary block accurately.

\begin{figure}[t]
  \centering
  \includegraphics[width=\linewidth]{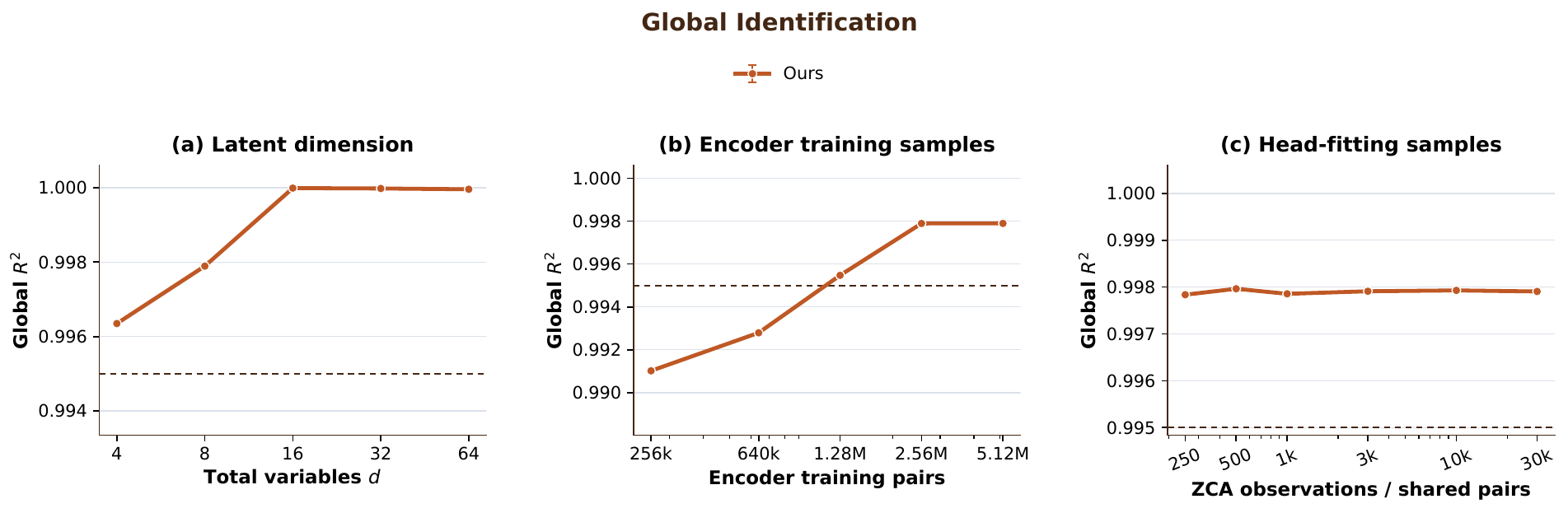}
  \caption{\textbf{Global linear identification under three scaling regimes.}
Higher Global $R^2$ indicates more accurate linear identification of the complete latent representation.}
  \label{fig:scaling_global_r2}
\end{figure}
\begin{figure}[t]
  \centering
  \includegraphics[width=\linewidth]{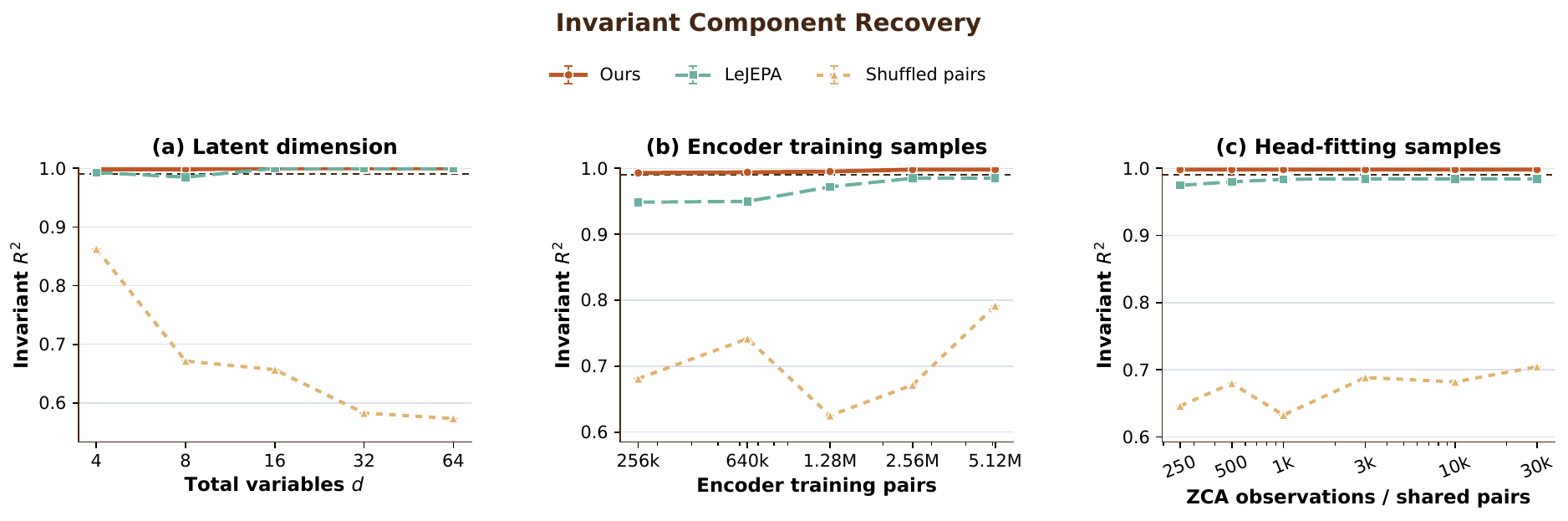}
  \caption{\textbf{Recovery of the invariant latent block under three scaling regimes.} Invariant $R^2$ measures the linear recoverability of the ground-truth invariant factors from the learned invariant representation.}
\label{fig:scaling_invariant_r2}
\vspace{-10pt}
\end{figure}
\begin{figure}[t]
  \centering
  \includegraphics[width=\linewidth]{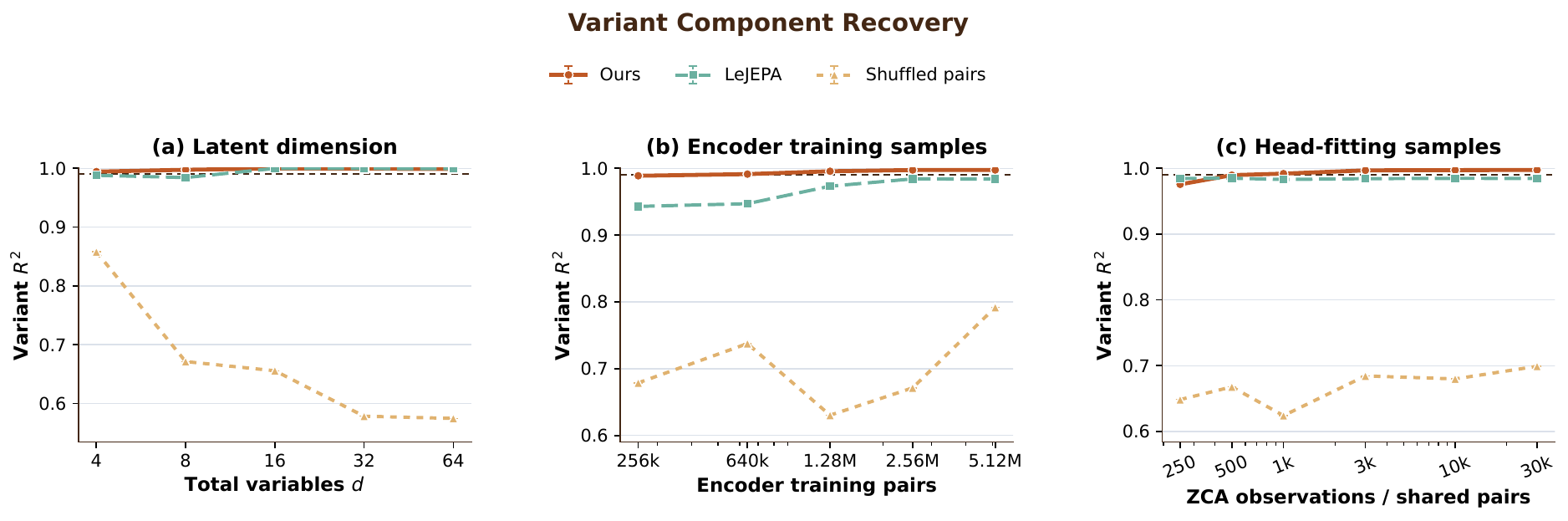}
  \caption{\textbf{Recovery of the variant latent block under three scaling regimes.} Variant $R^2$ measures the linear recoverability of the ground-truth variant factors from the learned variant representation.}
\label{fig:scaling_variant_r2}
\vspace{-15pt}
\end{figure}
\begin{figure}[t]
  \centering
  \includegraphics[width=\linewidth]{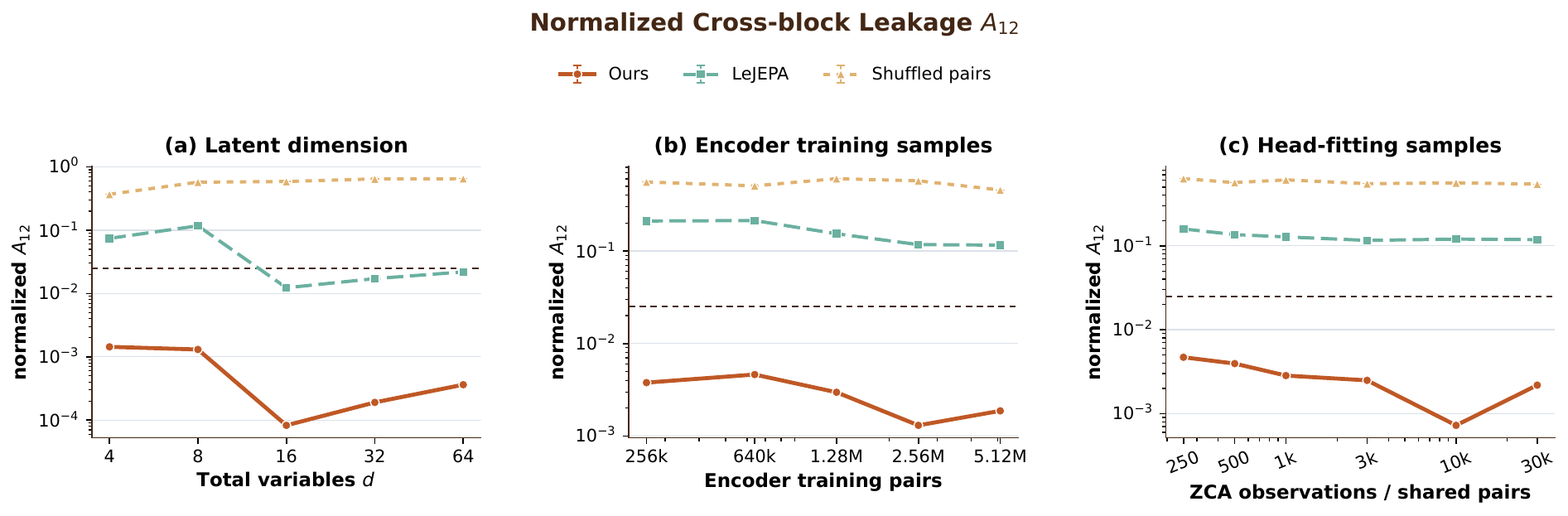}
  \caption{
\textbf{Normalized cross-block leakage $A_{12}$ under three scaling regimes.} The normalized $A_{12}$ score quantifies leakage from the variant latent block into the recovered invariant block and is defined as
$\lVert A_{12}\rVert_F/\sqrt{d_H}$.}
\label{fig:scaling_normalized_a12}
\vspace{-10pt}
\end{figure}
\begin{figure}[t]
  \centering
  \includegraphics[width=\linewidth]{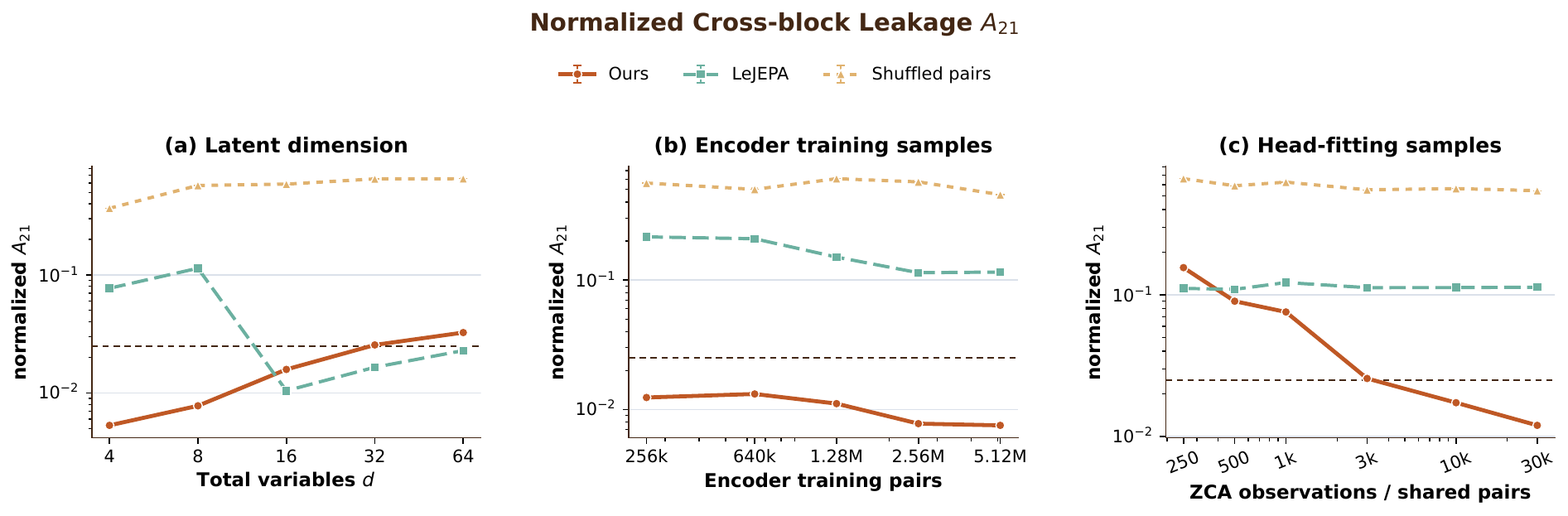}
  \caption{\textbf{Normalized cross-block leakage $A_{21}$ under three scaling regimes.} The normalized $A_{21}$ score quantifies leakage from the invariant latent
block into the recovered variant block and is defined as $\lVert A_{21}\rVert_F/\sqrt{d_L}$.}
\label{fig:scaling_normalized_a21}
\vspace{-10pt}
\end{figure}

\subsection{Efficiency at Matched Recovery Quality}
\paragraph{Efficiency Protocol.}
We compare the end-to-end efficiency of our reconstruction-free implementation with V3~\citep{wu2025unsupervised}, a content-style method based on variance-invariance constraints. Both implementations use the same latent dimension, nonlinear observation map, five random seeds and 256 atomic observations per optimization step. Each method receives its native grouping structure: \modelname uses matched pairs with shared content and varying style, whereas V3 uses groups with shared style and varying content. The two protocols share the same marginal latent distribution and total observation budget. We evaluate the content and style branches of V3 against $z_H$ and $z_L$, respectively. Runtime was measured using a monotonic wall-clock timer. For SplitJEPA, it includes data generation, predictive training, whitening, and matched-pair block recovery. For V3, it includes data generation, encoder-decoder training, vector quantization, and all variance-invariance losses. Model construction and held-out evaluation are excluded for both methods, and no training iterations are discarded as warm-up.
\vspace{-5pt}
\begin{wrapfigure}{r}{0.45\textwidth}
    \centering
    \includegraphics[width=\linewidth]{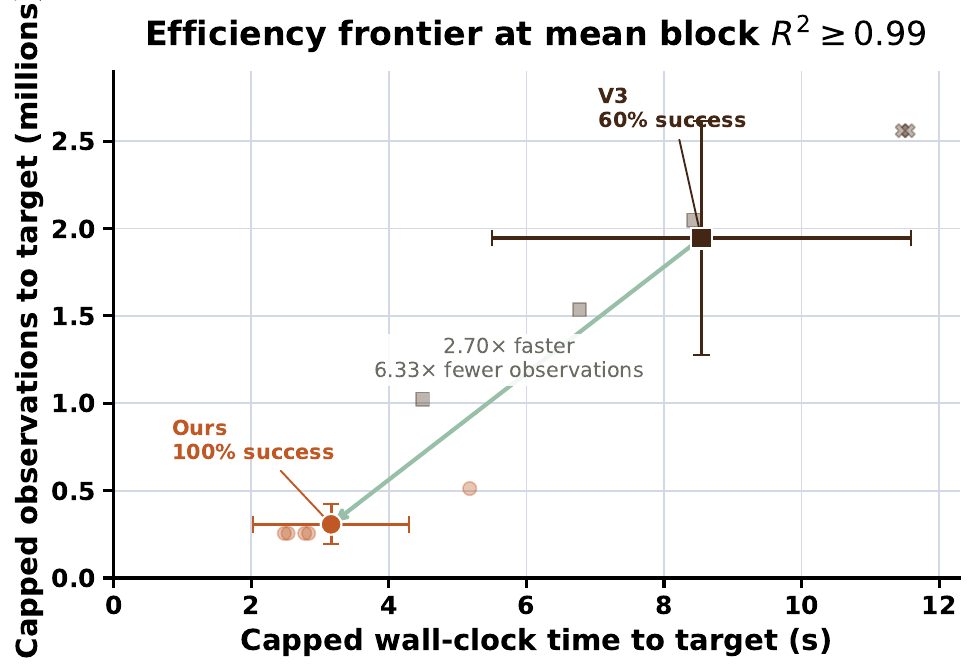}
    \caption{Simulation efficiency comparison.}
    \label{fig:simulation_efficiency}
    \vspace{-45pt}
\end{wrapfigure}

We evaluate both methods at 5\%, 10\%, 20\%, 40\%, 60\%, 80\%, and 100\% of a $10{,}000$-step training budget. The common recovery target is
$\frac{1}{2}(R^2_{\mathrm{Invariant}}+R^2_{\mathrm{Variant}})\geq0.99$.
Every run completes the full training budget. We report the cumulative runtime and observation count at the first evaluated checkpoint that reaches the target. Runs that never reach it are retained and assigned their full-budget runtime and observation count, yielding capped time-to-target and observation-cost metrics. Checkpoint evaluation time is excluded for both methods.

\paragraph{Efficiency Results.}
As shown in Fig.~\ref{fig:simulation_efficiency}, \modelname reaches the recovery target in all five runs, whereas V3 reaches it in three. The capped time-to-target is $3.16\pm1.14$ seconds for \modelname and $8.54\pm3.05$ seconds for V3, corresponding to a $2.70\times$ lower capped runtime for SplitJEPA. The difference in observation cost is larger: \modelname uses $0.307\pm0.114$ million observations under the same capped protocol, compared with $1.946\pm0.668$ million for V3, a $6.33\times$ reduction. The implementations contain 64 and 272 trainable parameters, respectively. The result demonstrates that \modelname achieves a higher target attainment rate with lower capped runtime, lower observation cost, and fewer trainable parameters.

\subsection{Additional Robustness}
\label{app:simulation-robustness}
\paragraph{Full-coordinate Nonlinear MLP.}
To test the block-recovery mechanism beyond the default coupling structure, we concatenate the latent blocks, apply a dense orthogonal affine transformation that mixes the two blocks, and then apply the elementwise map $\eta(u)=u+\alpha\tanh(u)$ for $\alpha\ge0$. Because $\eta'(u)=1+\alpha\operatorname{sech}^2(u)\ge1$, the elementwise map is smooth and injective. We use the same reconstruction-free training and block-recovery pipeline as in the main synthetic experiment. After training, the encoder is frozen and all block controls are applied to the same frozen encoder for each seed.
Table~\ref{tab:simulation-mlp} reports the five-seed results. Our recovery achieves the best invariant and variant $R^2$ values, with $\lVert A_{12}\rVert_F=0.0056\pm0.0007$,
$\lVert A_{21}\rVert_F=0.0187\pm0.0037$. In contrast, the LeJEPA condition retains substantially larger cross-block leakage despite using the same frozen encoder. Shuffled Pairs further degrades both block alignment and purity. These results show that the proposed representation-only recovery remains effective under joint nonlinear mixing, provided that the preceding predictive encoder reaches a sufficiently informative global representation.

\begin{table}[t]
  \centering
  \small
  \setlength{\tabcolsep}{2.8pt}
  \caption{
    \textbf{Reconstruction-free block recovery under joint nonlinear MLP mixing} (mean $\pm$ standard deviation over five seeds).
    All methods reuse the same frozen MLP encoder and differ only in the information available during spectral recovery. Global $R^2$ is approximately $0.977$ for all methods.
  }
  \label{tab:simulation-mlp}
  \begin{tabular}{lccccc}
    \toprule
    Method
      & Invariant $R^2$
      & Variant $R^2$
      & $\lVert A_{12}\rVert_F$
      & $\lVert A_{21}\rVert_F$
      & Block Purity\\
    \midrule
    LeJEPA
      & $0.6778\pm0.0360$
      & $0.6769\pm0.0387$
      & $1.0940\pm0.0696$
      & $1.0910\pm0.0710$
      & $0.6936\pm0.0393$\\
    Shuffled Pairs
      & $0.6598\pm0.0555$
      & $0.6606\pm0.0567$
      & $1.1243\pm0.0997$
      & $1.1220\pm0.1013$
      & $0.6750\pm0.0581$\\
    \textbf{\modelname (Ours)}
      & $\mathbf{0.9770\pm0.0003}$
      & $\mathbf{0.9769\pm0.0002}$
      & $\mathbf{0.0056\pm0.0007}$
      & $\mathbf{0.0187\pm0.0037}$
      & $\mathbf{0.99995\pm0.00002}$\\
    \bottomrule
  \end{tabular}
\end{table}

\paragraph{Coupling-strength sensitivity.}
We further vary the nonlinear coupling strength in the default synthetic configuration while keeping the training and recovery procedure fixed. We consider coupling strengths $0.5$, $1.0$ and $2.0$. Table~\ref{tab:simulation-sensitivity} reports the resulting global identification and block-structure measures. Increasing the coupling strength reduces Global $R^2$ and increases the orthogonality error, while the normalized cross-block terms become $0.0572$ and $0.0210$. At strength $2.0$, Global $R^2$ further decreases to $0.8765$, with orthogonality error and cross-block terms further increasing. Theorem~\ref{th2} conditions on the predictive encoder attaining the
globally identifiable linear solution class. This experiment shows how departures from that geometry under finite optimization appear jointly as increased orthogonality error and cross-block leakage.

\begin{table}[t]
  \centering
  \small
  \setlength{\tabcolsep}{5pt}
  \caption{\textbf{Sensitivity to nonlinear coupling strength.} Seed-$0$ diagnostic under the default synthetic configuration. Increasing the coupling strength progressively degrades the global identification stage.}
  \label{tab:simulation-sensitivity}
  \begin{tabular}{ccccc}
    \toprule
    Coupling Strength
      & Global $R^2$
      & $\lVert Q^\top Q-I\rVert_F/\sqrt{d}$
      & $\lVert A_{12}\rVert_F$
      & $\lVert A_{21}\rVert_F$\\
    \midrule
    $0.5$
      & $\mathbf{0.9979}$
      & $\mathbf{0.0199}$
      & $\mathbf{0.0026}$
      & $\mathbf{0.0155}$\\
    $1.0$
      & $0.9565$
      & $0.0586$
      & $0.0572$
      & $0.0210$\\
    $2.0$
      & $0.8765$
      & $0.1507$
      & $0.1393$
      & $0.0289$\\
    \bottomrule
  \end{tabular}
  \vspace{-10pt}
\end{table}

\section{Robotic Manipulation Experiments Additional Details}
\label{app:realworld}
\subsection{Robotic Manipulation Setting Details}
\label{app:realworld setting}
\paragraph{Data Processing.} We evaluate PickCube-v1\footnote{\url{https://maniskill.readthedocs.io/en/latest/tasks/table_top_gripper/index.html\#pickcube-v1}} and PushCube-v1\footnote{\url{https://maniskill.readthedocs.io/en/latest/tasks/table_top_gripper/index.html\#pushcube-v1}} in ManiSkill3. PickCube requires the robot to grasp a cube, transport it in 3 dimensions, and place it at a target pose, whereas PushCube requires the robot to establish contact and move the cube into a planar goal region. The visual input is a $128\times128$ RGB image from the third-person base camera. At each control step, the policy receives the four most recent images and predicts an 8-dimensional action. The same manipulation dynamics, object initialization distribution, action space, controller, and episode horizon are used for ID and OOD evaluation; only the specified visual nuisance is changed. For each task, the offline dataset contains 800 successful trajectories organized into 100 matched groups of eight. Trajectories within a group share object identity, initial object pose, and goal pose, while differing in robot initialization, execution trajectory, approach or contact direction, and timing. We use two forms of correspondence. Cross-view pairs render the same physical state from the fixed base and side cameras and therefore provide state-matched visual variation. Cross-trajectory pairs are weaker correspondences: frames are approximately aligned by normalized execution progress and preserve group-level task content.  These pairs broaden the coverage of trajectory variation while relaxing exact state alignment, allowing the diversity ablation to examine the trade-off between variation coverage and correspondence precision. We reserve $20\%$ of the trajectories within each group for validation.

\paragraph{Representation Learning.} The robotic implementation follows the same identification decomposition as the synthetic study: prediction in latent space preserves the evolving state, normalization regularizes the geometry of the complete representation, and matched correspondences determine the invariant block. The controlled synthetic setting estimates the final block transform explicitly through ZCA whitening and eigendecomposition. In the visual robotic setting, we optimize counterparts of these components jointly. Both tasks use an ImageNet-initialized ResNet-50 encoder operating on
$128\times128$ RGB observations. The encoder produces a partitioned representation $r=[r_H;r_L]$, with $d_H=d_L=64$ for PickCube and $d_H=d_L=128$ for PushCube. For robotic representation learning, an action-conditioned predictor maps $(r_t,a_t)$ to a stop-gradient target $r_{t+1}$, while the invariance loss is applied only to $r_H$. Variance and covariance regularization are used to prevent representation collapse. Each optimization step combines a batch of 32 adjacent temporal transitions with an independently shuffled batch of 32 matched pairs. Cross-trajectory and cross-view pairs are pooled for training, so their relative frequency follows the number of available pairs. Random pairs with different matched-group identities are excluded from the main training objective and are used only as diagnostic
controls and in the shuffled-correspondence ablation. Held-out cross-trajectory and cross-view pairs are used for block validation. We optimize with AdamW using an initial learning rate of $3\times10^{-4}$, weight decay $10^{-4}$, cosine learning-rate decay, and gradient clipping at norm $10$. PickCube is trained for 200 epochs and PushCube for 100 epochs. Ours-$r_H$ exposes only the learned invariant block to the downstream policy, whereas Ours-$r_H+r_L$ exposes the complete partitioned representation.

\paragraph{Downstream Policies and Evaluation.} All downstream policies are trained by behavior cloning on ID demonstrations. The policy head is a two-layer MLP with hidden dimensions $512$ and $256$. Raw learns directly from RGB observations, R3M uses a frozen pretrained R3M visual feature, and LeJEPA uses a frozen reconstruction-free representation without the proposed block decomposition. These method-specific visual frontends are retained throughout the comparison, and the demonstration budget, temporal context, action target, and evaluation protocol are held fixed. Each trained policy is evaluated on 100 closed-loop episodes under one ID and three OOD visual conditions. We report SR@10cm as the fraction of episodes in which the object reaches within $10$ cm of the goal at any time, using the three-dimensional object-goal distance for PickCube and the planar distance for PushCube. GRASP is the fraction of PickCube episodes in which the environment reports the object as grasped at least once, while CONTACT is the fraction of PushCube episodes in which the TCP comes within $4.5$ cm of the object at least once, serving as a consistent proxy for contact. Camera OOD rotates the base-camera azimuth by $10^\circ$, and the reported value averages the paired $-10^\circ$ and $+10^\circ$ conditions. Lighting OOD scales the intensity of the official scene lighting to $0.6$, while Visual OOD combines the camera and lighting interventions. The same physical reset seeds are shared across methods within each task, and no policy is fine-tuned or adapted using OOD observations. Unless stated otherwise, results are the mean $\pm$ standard deviation over three downstream training seeds.

\subsection{Robustness Experiments}
\label{app:realworld-robustness}
\paragraph{Sample Efficiency.} 
\begin{figure*}[t]
    \centering
    \begin{minipage}[t]{0.49\textwidth}
        \centering
        \textbf{PushCube}
        \vspace{1mm}
        
        \includegraphics[width=\linewidth]{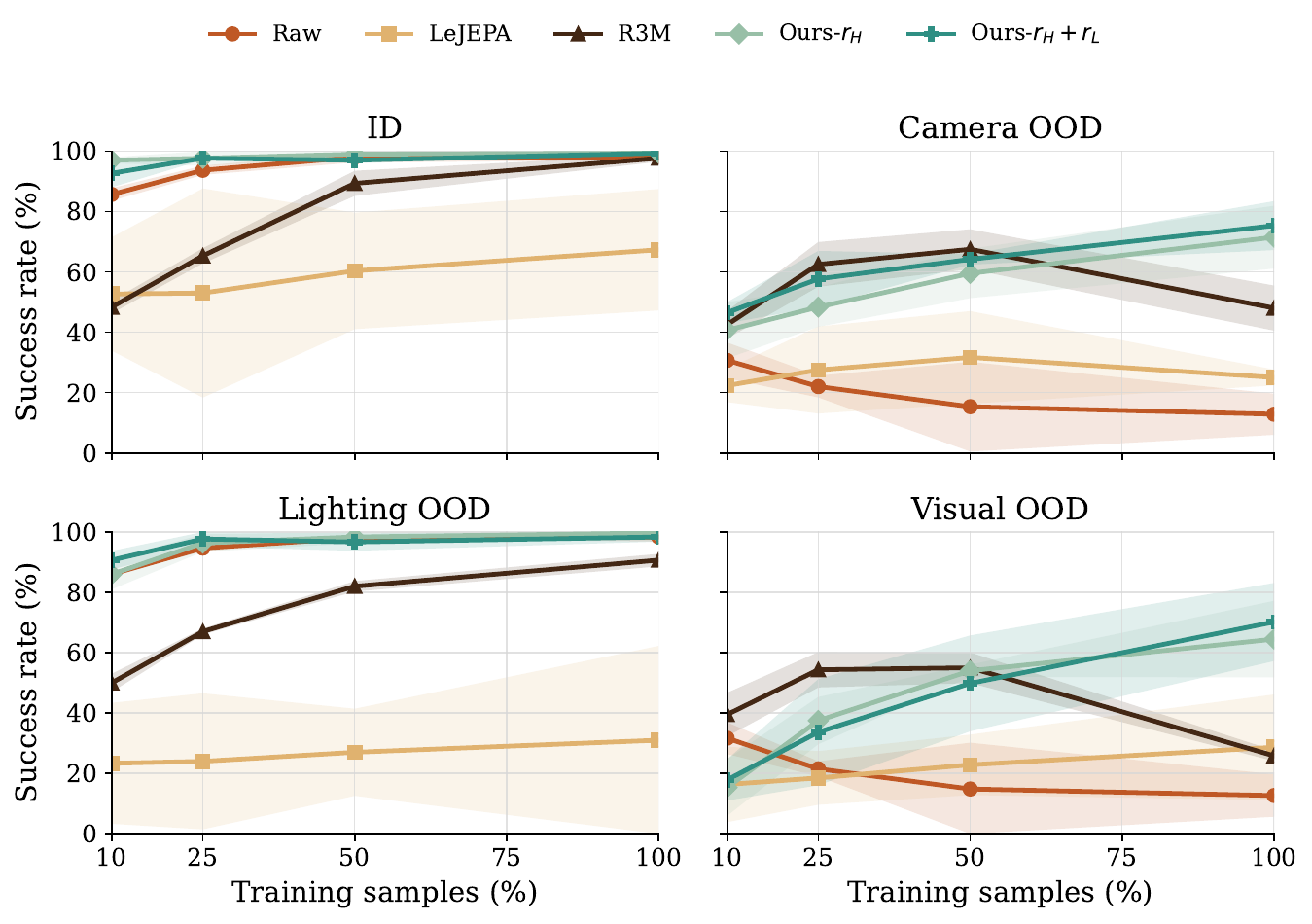}
    \end{minipage}
    \hfill
    \begin{minipage}[t]{0.49\textwidth}
        \centering
        \textbf{PickCube}
        \vspace{1mm}
        
        \includegraphics[width=\linewidth]{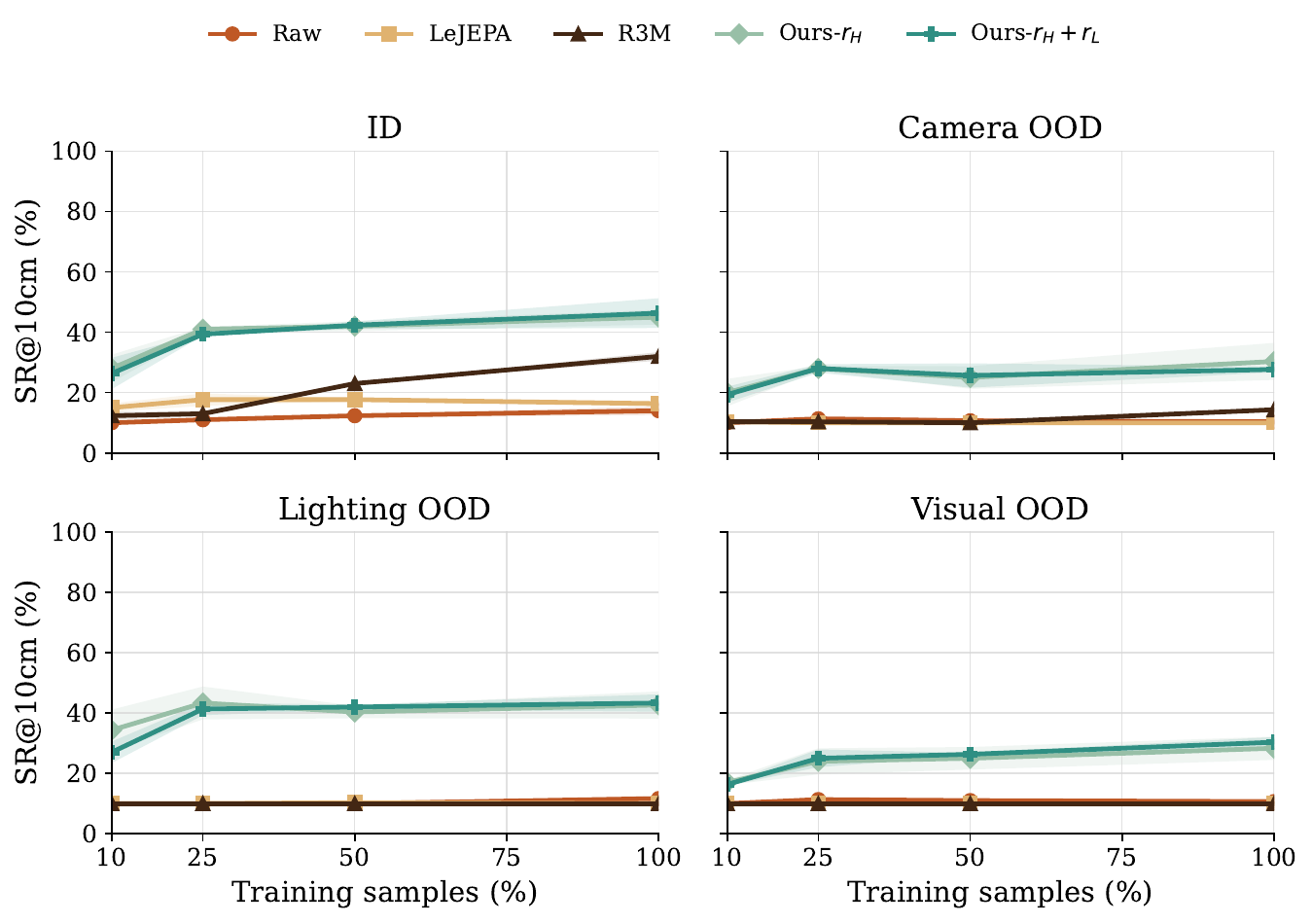}
    \end{minipage}
    
    \caption{
        \textbf{Sample efficiency of learned representations.}
        Performance of different representations under varying amounts of
        training data on PushCube and PickCube, evaluated under
        ID and OOD settings.
    }
    \label{fig:sample_efficiency}
    \vspace{-10pt}
\end{figure*}

\begin{table}[t]
\centering
\caption{
\textbf{Performance retention under representation interventions}}
\label{tab:intervention_retention}
\small
\setlength{\tabcolsep}{4.5pt}
\resizebox{\textwidth}{!}{
\begin{tabular}{lccc|ccc}
\toprule
& \multicolumn{3}{c|}{\textbf{PushCube}}
& \multicolumn{3}{c}{\textbf{PickCube}} \\
\cmidrule(lr){2-4}
\cmidrule(lr){5-7}
\textbf{Intervention}
& $r_H+r_L$ 
& $r_H$ 
& $r_H$ advantage
& $r_H+r_L$ 
& $r_H$ 
& $r_H$ advantage \\
\midrule

Zero $r_L$
& $93.7 \pm 3.4\%$
& $99.9 \pm 0.2\%$
& $+6.1$ pp
& $61.4 \pm 5.1\%$
& $99.7 \pm 0.5\%$
& $+38.4$ pp \\

Noise $0.5\sigma$
& $99.0 \pm 1.0\%$
& $100.0 \pm 0.0\%$
& $+1.0$ pp
& $87.9 \pm 0.8\%$
& $100.0 \pm 0.0\%$
& $+12.1$ pp \\

Noise $1.0\sigma$
& $90.2 \pm 0.5\%$
& $99.9 \pm 0.2\%$
& $+9.7$ pp
& $62.2 \pm 4.6\%$
& $100.0 \pm 0.0\%$
& $+37.8$ pp \\

Noise $2.0\sigma$
& $44.1 \pm 4.0\%$
& $99.9 \pm 0.2\%$
& $+55.8$ pp
& $29.6 \pm 0.8\%$
& $100.0 \pm 0.0\%$
& $+70.4$ pp \\

\bottomrule
\end{tabular}
}
\end{table}

We examine whether the advantage of the learned representations persists when the amount of downstream supervision is reduced. We vary the fraction of behavior-cloning demonstrations from 10\% to 100\%.  As shown in Figure~\ref{fig:sample_efficiency}, the proposed representations remain effective across training budgets with the clearest advantage appearing under visual distribution shifts. On PushCube, both Ours-$r_H$ and Ours-$r_H+r_L$ already achieve above 90\% ID SR@10cm with only 10\% of the demonstrations. Their Camera and Visual OOD performance further improves as the training budget increases. In contrast, the baseline representations either require substantially more demonstrations to approach the ID performance of our method or remain sensitive to the visual shifts. PickCube exhibits a stronger dependence on the amount of policy-training data because the task requires grasping, transporting, and accurately approaching the goal. The ID performance of baseline methods increases with data volume, but under OOD conditions most baselines do not improve with more data, instead reaching a low-performance plateau under these visual shifts. Nevertheless, our two representations consistently outperform the baselines across the evaluated budgets. Neither of our representation inputs dominates uniformly: Ours-$r_H$ is preferable in several low-data or camera-shift settings, whereas Ours-$r_H+r_L$ becomes competitive or stronger when additional execution-specific information is useful. Overall, the results show that the recovered blocks remain usable under limited downstream supervision and that the robustness improvement is not restricted to the full-data condition.

\paragraph{Block Validation.}
\begin{figure}[t]
    \centering

    \begin{minipage}[t]{0.48\linewidth}
        \centering
        \textbf{PickCube}
        \includegraphics[width=\linewidth]{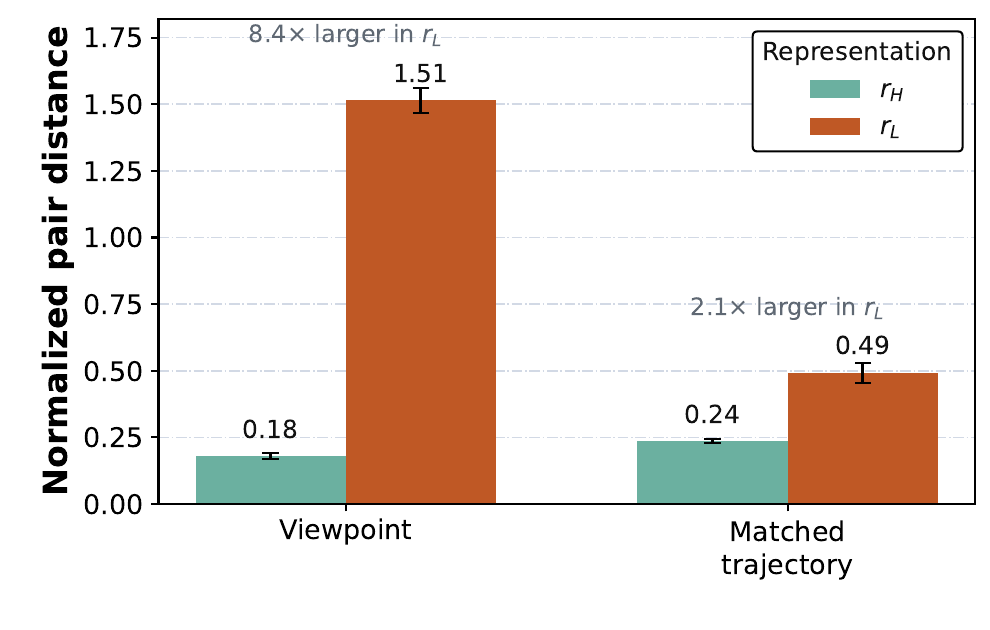}
    \end{minipage}
    \hfill
    \begin{minipage}[t]{0.48\linewidth}
        \centering
        \textbf{PushCube}

        \includegraphics[width=\linewidth]{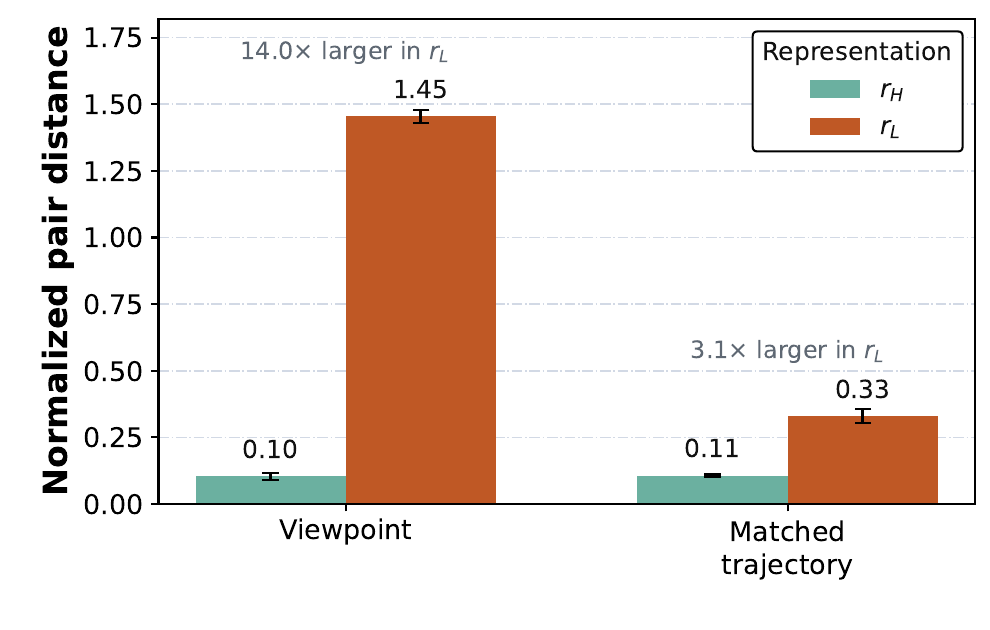}
    \end{minipage}
    \vspace{-10pt}
    \caption{
        \textbf{Empirical validation of invariant-variant block specialization.}
        Representation responses to matched viewpoint and trajectory variations
        on PickCube and PushCube. The learned invariant block $r_H$ changes
        substantially less than the variant block $r_L$ under the same
        variations.}
    \label{fig:block_validation}
    \vspace{-10pt}
\end{figure}
To verify whether our $r_H$ and $r_L$ exhibit the intended differential sensitivity, we measure the representation distance separately in $r_H$ and $r_L$ normalized by the corresponding distance $D_B(x,x')=\frac{1}{\sqrt{d_B}}\left\|\frac{r_B(x)-r_B(x')}{\sigma_B}\right\|_2$ between randomly paired observations for each pair type, where $\sigma_B$ is the coordinate-wise standard deviation estimated from the
held-out pair endpoints. Figure~\ref{fig:block_validation} shows a consistent separation between the two blocks. On PickCube, the normalized response to a viewpoint change is 0.18 in $r_H$ and 1.51 in $r_L$, making the response of the variant block $8.4\times$ larger. Across matched trajectories, the corresponding distances yield a $2.1\times$ difference. The same pattern is more pronounced on PushCube: the viewpoint and the matched-trajectory responses correspond to $14.0\times$ and $3.1\times$ larger changes in $r_L$, respectively. These results provide a representation validation that is distinct from downstream SR. The invariant block remains comparatively stable when viewpoint and execution factors change, whereas the variant block responds more strongly to the same interventions. It is consistent with the intended block structure in which task-level information is retained in $r_H$, while viewpoint and trajectory variation is preferentially expressed through $r_L$.

\begin{table*}[t]
\centering
\caption{
\textbf{Robot manipulation representation ablation results.}
Values report the mean $\pm$ standard deviation over three training seeds.  All variants retain the same partitioned architecture and downstream pipeline as the Full model and remove only the indicated component. 
ID is evaluated using SR@10cm and the task-specific interaction rate
(CONTACT for PushCube and GRASP for PickCube); all OOD columns report
SR@10cm.
}
\vspace{-10pt}
\label{tab:representation_ablation}
\small
\setlength{\tabcolsep}{3.2pt}
\resizebox{\textwidth}{!}{
\begin{tabular}{lccccc|ccccc}
\toprule
& \multicolumn{5}{c|}{\textbf{PushCube}}
& \multicolumn{5}{c}{\textbf{PickCube}} \\
\cmidrule(lr){2-6}
\cmidrule(lr){7-11}
\textbf{Method}
& \multicolumn{2}{c}{ID}
& \multicolumn{3}{c|}{OOD}
& \multicolumn{2}{c}{ID}
& \multicolumn{3}{c}{OOD} \\
\cmidrule(lr){2-3}
\cmidrule(lr){4-6}
\cmidrule(lr){7-8}
\cmidrule(lr){9-11}
& SR@10cm
& CONTACT
& Camera
& Lighting
& Visual
& SR@10cm
& GRASP
& Camera
& Lighting
& Visual \\
\midrule

Full ($r_H$)
& $99.3 \pm 0.6$
& $100.0 \pm 0.0$
& $71.5 \pm 10.4$
& $99.7 \pm 0.6$
& $64.5 \pm 12.7$
& $45.0 \pm 2.4$
& $97.0 \pm 1.4$
& $\mathbf{30.3 \pm 6.1}$
& $42.7 \pm 4.5$
& $28.3 \pm 3.9$ \\

Full ($r_H+r_L$)
& $99.3 \pm 0.6$
& $100.0 \pm 0.0$
& $75.3 \pm 8.1$
& $98.3 \pm 1.5$
& $\mathbf{70.2 \pm 12.9}$
& $\mathbf{46.3 \pm 4.9}$
& $96.7 \pm 0.5$
& $27.7 \pm 0.9$
& $43.3 \pm 2.9$
& $\mathbf{30.3 \pm 1.7}$ \\
\midrule

w/o Predictive ($r_H$)
& $98.7 \pm 0.6$
& $100.0 \pm 0.0$
& $65.7 \pm 6.4$
& $99.7 \pm 0.6$
& $54.0 \pm 14.3$
& $39.7 \pm 4.5$
& $92.7 \pm 1.2$
& $20.7 \pm 0.9$
& $35.7 \pm 3.9$
& $19.0 \pm 0.8$ \\

w/o Predictive ($r_H+r_L$)
& $98.7 \pm 0.6$
& $100.0 \pm 0.0$
& $65.3 \pm 8.8$
& $98.0 \pm 1.0$
& $59.8 \pm 12.4$
& $39.3 \pm 2.5$
& $94.0 \pm 1.4$
& $18.3 \pm 1.7$
& $39.3 \pm 3.4$
& $18.7 \pm 1.2$ \\
\midrule

w/o Invariance ($r_H$)
& $98.7 \pm 0.6$
& $100.0 \pm 0.0$
& $19.7 \pm 9.5$
& $88.7 \pm 6.7$
& $15.7 \pm 11.0$
& $44.3 \pm 1.2$
& $92.0 \pm 4.2$
& $24.0 \pm 3.6$
& $45.3 \pm 1.9$
& $15.3 \pm 0.9$ \\

w/o Invariance ($r_H+r_L$)
& $98.0 \pm 1.0$
& $99.7 \pm 0.6$
& $15.7 \pm 10.1$
& $93.3 \pm 1.5$
& $6.5 \pm 4.5$
& $41.0 \pm 3.7$
& $92.3 \pm 0.9$
& $24.0 \pm 3.3$
& $41.7 \pm 3.4$
& $13.0 \pm 0.8$ \\
\midrule

w/o Diversity ($r_H$)
& $\mathbf{100.0 \pm 0.0}$
& $100.0 \pm 0.0$
& $78.0 \pm 4.8$
& $99.7 \pm 0.6$
& $67.0 \pm 9.5$
& $45.0 \pm 2.8$
& $\mathbf{99.0 \pm 0.0}$
& $21.7 \pm 1.2$
& $44.0 \pm 1.4$
& $18.7 \pm 3.1$ \\

w/o Diversity ($r_H+r_L$)
& $\mathbf{100.0 \pm 0.0}$
& $100.0 \pm 0.0$
& $\mathbf{79.3 \pm 6.8}$
& $99.7 \pm 0.6$
& $67.8 \pm 15.3$
& $43.3 \pm 0.5$
& $98.0 \pm 0.8$
& $21.3 \pm 2.9$
& $\mathbf{47.0 \pm 3.3}$
& $20.3 \pm 2.1$ \\

\bottomrule
\end{tabular}
}
\end{table*}

\paragraph{Controlled Intervention.}
To evaluate the robustness implication of Theorem~\ref{th3}, we compare two representation interfaces: one exposes only the invariant block $r_H$, while the other exposes both $r_H$ and the variant block $r_L$. We hold the environment state and all other policy inputs fixed and intervene only on $r_L$. Across both tasks, the $r_H$-only policy retains approximately $100\%$ of its clean performance, whereas the $r_H+r_L$ policy becomes progressively more sensitive as the magnitude of corruption increases. Table~\ref{tab:intervention_retention} shows that routing predictions through $r_H$ provides a robust interface to perturbations confined to $r_L$, while exposing the variant block introduces an additional pathway through which such perturbations can affect downstream decisions. These results support Theorem~\ref{th3} that once the invariant block is separated from variation, predictions based only on $r_H$ are substantially more robust to perturbations of the variant component.

\paragraph{Temporal Transferability}
\begin{figure}[t]
  \centering
  \includegraphics[width=0.8\linewidth]{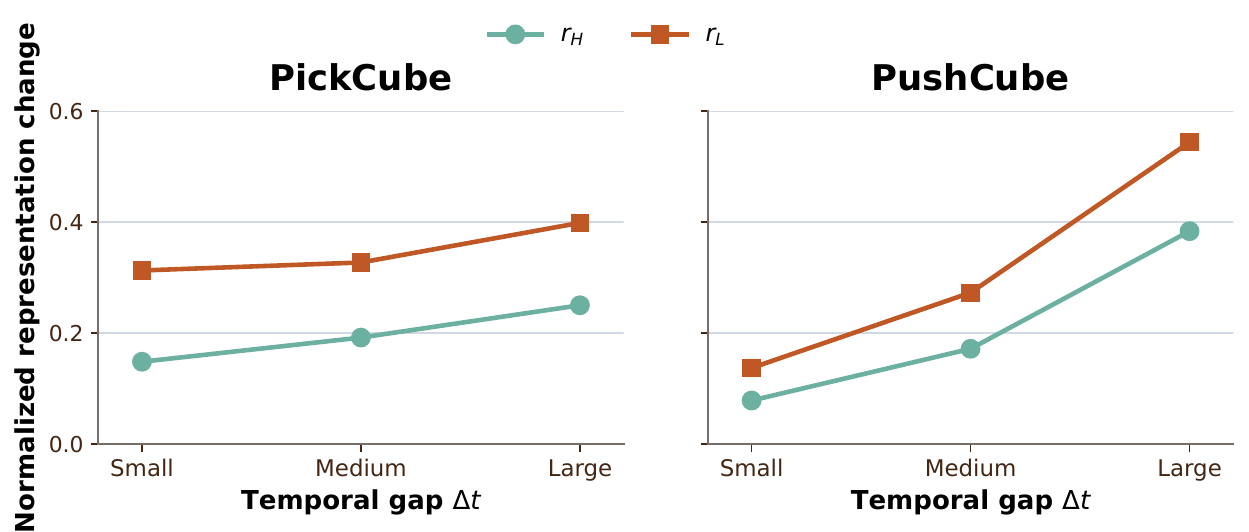}
  \caption{
\textbf{Temporal transferability.} Normalized representation changes between state-matched observations from different trajectories and timesteps averaged over three seeds.}
\label{fig:temporal_transferability}
\vspace{-10pt}
\end{figure}
We evaluate whether $r_H$ transfers more consistently than $r_L$ across execution time using state-matched asynchronous pairs. Each pair is drawn from different validation trajectories in the same $H$-group and at different normalized timesteps. Simulator states are used only to construct the pairs, while the encoder receives observations alone. We group the pairs into small, medium and large normalized timestep gaps and measure each block's representation change relative to its random-pair distance. For each block, we measure its standardized representation distance $D_B$ between the paired observations and normalize it by the mean distance between observations with different $H$: $S_B(v)=\frac{\mathbb{E}_{(x_a,x_b)\sim v}\left[D_B(x_a,x_b)\right]
}{\mathbb{E}_{(x_a,x_b)\sim \mathrm{random}}\left[D_B(x_a,x_b)\right]}$. A smaller $S_B$ indicates that block $B$ transfers more consistently across
the tested temporal variation. We use $S_H/S_L$ as the relative stability
measure, where a value below one indicates greater temporal stability in
$r_H$.All results are averaged over three independently trained seeds without additional training.

Figure~\ref{fig:temporal_transferability} shows a consistent separation between the two representation blocks. Across both tasks and all temporal gaps, $r_H$ changes less than $r_L$ between state-matched. As the temporal gap increases, both representations exhibit larger changes, reflecting the increasing difference between observations collected farther apart in execution, while the representation shift in $r_H$ is always smaller than that in $r_L$. The asynchronous pairs preserve the shared task condition and current physical state while changing the trajectory-specific temporal context. Accordingly, the relative stability of $r_H$ indicates that it captures the information invariant across these observations, whereas the greater variation in $r_L$ reflects its sensitivity to execution-specific factors. Since a downstream predictor operating on $r_H$ encounters a smaller representation shift across timesteps, the learned invariant block provides a more transferable interface for prediction throughout task execution.

\end{document}